\documentclass{article}

\usepackage{microtype}
\usepackage{graphicx}
\usepackage{subcaption}
\usepackage{booktabs}
\usepackage{multirow}
\usepackage{xcolor}
\usepackage{colortbl}
\usepackage{enumitem}
\definecolor{bestbg}{HTML}{FFD1D1}    
\definecolor{secondbg}{HTML}{D1E8FF}  
\definecolor{rowhighlight}{gray}{0.96} 

\usepackage{hyperref}

\usepackage[preprint]{icml2026}

\usepackage{amsmath}
\usepackage{amssymb}
\usepackage{mathtools}
\usepackage{amsthm}

\usepackage[capitalize,noabbrev]{cleveref}

\theoremstyle{plain}
\newtheorem{theorem}{Theorem}[section]

\theoremstyle{definition}

\newtheorem{assumption}[theorem]{Assumption}
\theoremstyle{remark}

\usepackage{booktabs}
\usepackage{multirow}
\usepackage{adjustbox}

\usepackage{makecell}
\usepackage[table]{xcolor}
\usepackage{subcaption}

\usepackage{array}

\usepackage{newfloat}
\usepackage{listings}
\DeclareCaptionStyle{ruled}{labelfont=normalfont,labelsep=colon,strut=off} 
\floatstyle{ruled}
\newfloat{listing}{tb}{lst}{}
\floatname{listing}{Listing}

\usepackage{booktabs}

\usepackage[textsize=tiny]{todonotes}

\icmltitlerunning{Rethinking Irregular Time Series Forecasting  from the Perspective of Basis Functions}

\begin{document}

\twocolumn[
  \icmltitle{Rethinking Irregular Time Series Forecasting \\  from the Perspective of Basis Functions}



  \icmlsetsymbol{equal}{*}

  \begin{icmlauthorlist}
    \icmlauthor{Rongwen Li}{hnu}
    \icmlauthor{Changjian Chen}{hnu}
  \end{icmlauthorlist}

  \icmlaffiliation{hnu}{College of Computer Science and Electronic Engineering, Hunan University, Changsha, Hunan, China}

  \icmlcorrespondingauthor{Changjian Chen}{changjianchen@hnu.edu.cn}

  \icmlkeywords{Machine Learning, ICML}

  \vskip 0.3in
]



\printAffiliationsAndNotice{}  

\begin{abstract}
Irregular time series forecasting is crucial in many domains, such as healthcare and meteorological observation. 
However, due to the inherent characteristics of irregular time series, including sparse observations and non-uniform sampling, accurately predicting future dynamics remains challenging. 
In light of these two characteristics, many existing methods aggregate irregular observations into fixed-dimensional estimated response coefficients through predefined basis functions and use these coefficients as sequence representations. 
Nevertheless, this modeling paradigm still suffers from two key limitations: (i) a potential non-vanishing asymptotic bias caused by ignoring the sampling density of timestamps; and (ii) the limited adaptability of predefined basis functions to diverse temporal patterns.
In this study, we propose a Debiased Neural Basis-Function Network (DNBNet) to address these challenges.
Its core is a debiased neural basis-function response mechanism, which corrects asymptotic bias through importance sampling while parameterizing basis functions with neural networks to adapt to diverse temporal patterns. 
In addition, considering the sparsity of irregular data, we design a novel multi-scale decomposition module based on average pooling, together with a mass-aware fusion mechanism, to obtain richer representations. 
Finally, a dual-branch decoder is employed for forecasting. 
Extensive experiments on multiple real-world datasets demonstrate the effectiveness of DNBNet and its strong generalizability across diverse irregular time series scenarios. Our code can be obtained at https://github.com/hnu-vis/DNBNet.
\end{abstract}

\section{Introduction}
Irregular Multivariate Time Series (IMTS) consist of observations from multiple variables collected asynchronously at non-uniform, variable-specific timestamps, often with unequal sampling intervals and missing values.
Such data are widely encountered in various domains, including medical monitoring, sensor networks, and meteorological observations~\cite{DBLP:conf/aaai/GhassemiPNBCSF15,DBLP:journals/sensors/DecorteMLMLMV24,afrifa2020missing}. They are typically generated by underlying continuous-time dynamic processes, but can only be observed through sparse and irregular measurements in practice. As a result, learning reliable representations and making accurate predictions remain challenging.

\begin{table}[t]
\centering
\renewcommand{\arraystretch}{1}
\setlength{\tabcolsep}{3pt}
\caption{
Comparison of the types of predefined basis functions adopted by
different methods, including their specific mathematical forms and
function curves, reflecting their distinct modeling assumptions and
interpretations of the temporal dynamics in the data.}
\footnotesize
\begin{tabular}{@{}
>{\centering\arraybackslash}m{0.18\columnwidth}
>{\centering\arraybackslash}m{0.35\columnwidth}
>{\centering\arraybackslash}m{0.23\columnwidth}
>{\centering\arraybackslash}m{0.17\columnwidth}
@{}}
\toprule
\textbf{Basis Type}
&
\textbf{Formulation}
&
\textbf{Shape}
&
\textbf{Related Work}
\\
\midrule

\raisebox{-0.08in}{
  \makecell{
    RBF basis \\
    function
  }
}
&
\raisebox{-0.08in}{
  {\footnotesize
  $\displaystyle
  \exp\!\left(
    -\frac{(t-c_k)^2}{2\sigma_k^2}
  \right)
  $}
}
&
\raisebox{-0.5\height}{
  \includegraphics[
    height=0.5in,
    keepaspectratio
  ]{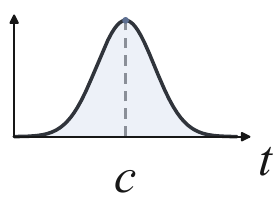}
}
&
\raisebox{-0.04in}{
  \parbox[c]{0.17\columnwidth}{
    \centering
    \cite{zhou2026revitalizing}
  }
}
\\

\midrule

\makecell{
  Fourier basis \\
  function
}
&
{\footnotesize
$\displaystyle
\begin{aligned}
  &\cos(2\pi\omega_k t),\\
  &\sin(2\pi\omega_k t)
\end{aligned}
$}
&
\raisebox{-0.5\height}{
  \includegraphics[
    height=0.5in,
    keepaspectratio
  ]{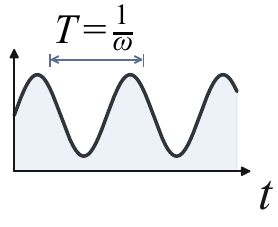}
}
&
\cite{DBLP:conf/icml/FonsSEFVV25,qiu2026bridging}
\\

\midrule

\makecell{
  Attention basis \\
  function
}
&
{\footnotesize
$\displaystyle
\frac{\exp(c_k Q^{\top}Kt)}{\sqrt{d}}
$}
&
\raisebox{-0.5\height}{
  \includegraphics[
    height=0.5in,
    keepaspectratio
  ]{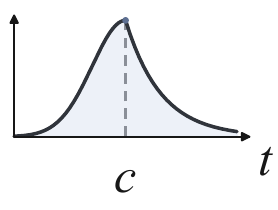}
}
&
\cite{DBLP:conf/iclr/ShuklaM21,lee2026adaptive}
\\

\midrule

\makecell{
  Soft-window \\
  basis function
}
&
{\scriptsize
$\displaystyle
\sigma\!\left(
  \frac{
    t_k^{\mathrm{right}}-t
  }{
    \operatorname{softplus}(\delta)
  }
\right)
\cdot
\sigma\!\left(
  \frac{
    t-t_k^{\mathrm{left}}
  }{
    \operatorname{softplus}(\delta)
  }
\right)
$}
&
\raisebox{-0.5\height}{
  \includegraphics[
    height=0.5in,
    keepaspectratio
  ]{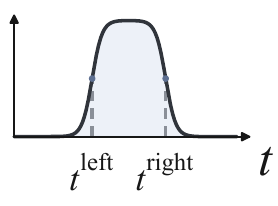}
}
&
\cite{liu2026rethinking}
\\

\bottomrule
\end{tabular}

\label{tab:basis_function}
\end{table}

To address this issue, many recent methods~\cite{DBLP:conf/iclr/ShuklaM21,zhou2026revitalizing,liu2026rethinking,lee2026adaptive,qiu2026bridging} attempt to map irregular observations into regular fixed-dimensional representations for subsequent prediction. Although these methods differ substantially in model design, we observe that most of them can be interpreted from a unified basis function perspective: \textbf{the model computes the responses of an irregular time series to a set of predefined temporal basis functions, and uses the resulting coefficients as the sequence representation.} 
These coefficients summarize how strongly the observed irregular time series aligns with each temporal basis, thereby encoding potential temporal patterns in a compact form.

From this perspective, the key distinction among different methods lies in the form of the adopted basis functions. As shown in Table~\ref{tab:basis_function}, existing methods use different predefined temporal basis functions and their corresponding response curves to encode temporal information. For example, radial basis functions (RBFs) typically emphasize local temporal responses and are therefore well suited for capturing information around specific time regions~\cite{zhou2026revitalizing}. In contrast, Fourier basis functions provide global temporal support and are more naturally aligned with modeling long-range structures such as periodicity and seasonality~\cite{qiu2026bridging}. Therefore, the choice of basis functions is not merely a technical detail; it implicitly determines which temporal patterns are easier for the model to represent, thereby introducing different structural priors and inductive biases.

This unified perspective also reveals two key limitations of existing methods.
\textbf{1) Potential asymptotic bias under irregular sampling:} existing discrete approximations for response coefficients are generally reasonable under regular or nearly uniform sampling, but they ignore the distribution of observation timestamps in irregular sampling scenarios, which may lead to a non-vanishing asymptotic bias.
\textbf{2) Limited adaptability of predefined bases:} although predefined basis functions introduce useful inductive biases, their fixed forms restrict the model's ability to adapt to different datasets and sampling patterns for irregular time series.

To overcome the above limitations, we propose a Debiased Neural Basis-Function Network (DNBNet). Specifically, we design a \textbf{Debiased Neural Basis-function Response} mechanism, which estimates the timestamp density via Kernel Density Estimation (KDE) to correct the asymptotic bias caused by non-uniform sampling.
Moreover, it parameterizes temporal basis functions with learnable neural networks, thereby improving the model's adaptability to complex temporal patterns. In addition, considering the sparsity of irregular time series data, we further develop a novel multi-scale decomposition based on average pooling and mass-aware fusion mechanism to obtain richer feature representations. Finally, the fused representations are fed into a dual-branch decoder for more accurate prediction. Extensive experiments on multiple real-world datasets demonstrate the effectiveness of the proposed framework.

\begin{itemize}
    \item \textbf{A unified basis-function perspective} of irregular time series forecasting, which reveals two key limitations of existing methods: potential asymptotic estimation bias caused by irregular sampling, and the limited ability of predefined basis functions to adaptively model diverse temporal patterns.

    \item \textbf{A DNBNet with a neural basis-response mechanism} that corrects the asymptotic bias and produces more expressive response coefficients. Building on this mechanism, we also incorporate multi-scale decomposition and mass-aware fusion to learn richer feature representations.

    \item \textbf{Extensive experiments} on multiple irregular time series forecasting benchmarks show that DNBNet consistently achieves superior predictive performance compared with existing competitive methods.
\end{itemize}




\section{Related Work}
\subsection{Irregular Multivariate Time Series Forecasting}
Recent IMTS forecasting methods can be roughly divided into three categories. \textbf{(i)} Some methods directly model the underlying continuous time based ODE or CDE~\cite{DBLP:conf/nips/MercataliFC24,oh2026flowpath}. For instance, Latent ODE~\cite{rubanova2019latent} leverages ODEs to construct continuous-time sequences in the latent space, thereby enabling interpolation and extrapolation. To avoid inefficient numerical integration for ODEs, Neural Flow~\cite{DBLP:conf/nips/BilosSRJG21} turn to directly model the solution of ODEs. \textbf{(ii)} Methods based on specific data structures model message passing among observation points~\cite{yalavarthi2024grafiti,DBLP:conf/iclr/ZhangZTZ22}. For example, SeFT~\cite{horn2020set} utilizes set, whereas GraFITi~\cite{yalavarthi2024grafiti} and HyperIMTS~\cite{DBLP:conf/icml/LiL0ZL025} use bipartite graphs and hypergraphs, respectively. \textbf{(iii)} Some methods learn compact representations through temporal alignment and aggregation~\cite{DBLP:conf/icml/ZhangYL0024,DBLP:conf/icml/LuoZ0025}. tPatchGNN~\cite{DBLP:conf/icml/ZhangYL0024} integrates region-level information by a fixed-span patching strategy, and APN~\cite{liu2026rethinking} designs an adaptive patch-span learning mechanism to avoid the meaningless and sparse feature representation from rigid patch size. Despite these advancements, many representation learning methods ignore the influence of timestamp density, thereby resulting in asymptotic biases.

\subsection{Basis Functions for Irregular Multivariate Time Series}
Basis functions provide a classical perspective for continuous signals, i.e., decomposing and characterizing complex signals through a set of response coefficients over temporal patterns. This idea has been widely used in regularly sampled time series analysis~\cite{DBLP:conf/iclr/OreshkinCCB20}. For example, some methods~\cite{DBLP:conf/icml/ZhouMWW0022,DBLP:conf/kdd/QiuW0GH025} capture periodic components of sequences via the Fast Fourier Transform (FFT). Similarly, many IMTS methods also utilize basis functions to extract temporal information. For instance, mTAN~\cite{DBLP:conf/iclr/ShuklaM21} proposes a time-aware attention mechanism to aggregate irregular observations using attention-based basis functions with predefined reference points. KAFNet~\cite{zhou2026revitalizing} utilizes Gaussian temporal kernel aggregation constructed from radial basis functions (RBFs) to efficiently model irregular time series. LSCD~\cite{DBLP:conf/icml/FonsSEFVV25} performs imputation with better consideration of the global structure of irregular time series through Fourier bases. However, previous methods usually ignore the inherent non-uniform sampling of irregular time series when estimating basis responses, thereby introducing asymptotic bias.

\section{Preliminaries and Motivation}
In this section, we first formalize the irregular time series forecasting problem and then present the discrete computation form of basis-function response coefficients adopted by existing methods. Based on this formulation, we further analyze the bias introduced by discrete response estimation under irregular sampling, which motivates us to design a debiased response estimation mechanism.

\subsection{Problem Definition}
Given an irregular multivariate time series
$\mathcal{O}=\{\mathcal{O}^n\}_{n=1}^{N}$ with $N$ variates, where
$\mathcal{O}^n=\{(t_i^n,x_i^n)\}_{i=1}^{L_n}$ denotes the observations of the $n$-th variate at non-uniform timestamps, and $(t_i^n,x_i^n)$ represents the value $x_i^n$ recorded at time $t_i^n$.
Irregular time series forecasting aims to predict future values at a set of query timestamps
$\mathcal{Q}=\{\mathcal{Q}^n\}_{n=1}^{N}$, with
$\mathcal{Q}^n=\{q_j^n\}_{j=1}^{Q_n}$. Formally, the goal is to learn a model $f_\theta$ that maps the historical observations and query timestamps to the corresponding predictions:
\begin{align}
    f_\theta(\mathcal{O}, \mathcal{Q})
    \rightarrow
    \hat{\mathcal{Y}} 
    =
    \left\{\{\hat{y}_j^n\}_{j=1}^{Q_n}\right\}_{n=1}^{N},
\end{align}
where $\hat{y}_j^n$ denotes the predicted value of the $n$-th variate at timestamp $q_j^n$.

\subsection{Basis-Function Response Representation}
Given a continuous-time signal $x(t)$ and a set of temporal basis functions $\{\phi_k(t)\}_{k=1}^{K}$, the ideal response coefficient with respect to the $k$-th basis is defined by the following integral:
\begin{align}
\label{eq1}
    c_k = \langle x,\psi_k\rangle =  \int_{\mathcal T} x(t)\psi_k(t)\,dt, 
\end{align}
where $\psi_k(t)\!\!=\!\!\frac{\phi_k(t)}{\int_{\mathcal T}\phi_k(t)\,dt}$ or $\psi_k(t)\!\!=\!\!\frac{\phi_k(t)}{\int_{\mathcal T}\phi_k^2(t)\,dt}$ correspond to the weighted average response and projection response for nonnegative and orthogonal basis functions, respectively. For example, a commonly used weighted average response coefficient is given by
$c_k\!\!=\!\!\frac{\int_{\mathcal T}x(t)\phi_k(t)\,dt}
{\int_{\mathcal T}\phi_k(t)\,dt}$.
In practice, only discrete observations $\{(t_i,x_i)\}_{i=1}^{L}$ are available. Existing methods therefore usually approximate the above continuous integral by summing over the observed data, e.g.,
$\hat{c}_k\!=\!\frac{\sum_{i=1}^{L}x_i\phi_k(t_i)}
{\sum_{i=1}^{L}\phi_k(t_i)}$~\cite{DBLP:conf/iclr/ShuklaM21,zhou2026revitalizing,liu2026rethinking}. In the following, we focus on the weighted average response formulation for clarity, while the analysis for the projection response is similar.

\subsection{Asymptotic Bias Analysis of Discrete Response Estimation}
\label{sec1}
Existing methods approximate the continuous-time response $c_k$ using the discrete computation form $\hat{c}_k$, which is reasonable under regular sampling. However, when the observations are irregularly sampled, this discrete approximation may lead to an asymptotic bias.
To facilitate the subsequent analysis, we consider a normalized time domain $\mathcal{T}=[0,1]$ and introduce the following mild assumption:
\begin{assumption}
\label{ass1}
    The continuous signal $x(t)$ and the  nonnegative basis functions $\phi_k(t)$ are bounded, i.e.,
    $|x(t)| \le 1$ and $|\phi_k(t)| \le m$ for any $t$ and $k$.
    Moreover, each basis function has a nontrivial mass under both the uniform time density and the timestamp sampling density:
    \begin{align}
        \min\left\{
        \int_{\mathcal{T}}\phi_k(t)dt,
        \int_{\mathcal{T}}\phi_k(t)p(t)dt
        \right\}
        \ge a.
    \end{align}
\end{assumption}

Assumption~\ref{ass1} is reasonable in practice. Data normalization and the construction of basis functions typically ensure the existence of upper bounds for $x(t)$ and $\phi_k(t)$, while the lower bound on the basis mass guarantees that each basis function carries sufficient information. Under Assumption~\ref{ass1}, we give the following theorem, which provides an estimation error bound between the discrete coefficient $\hat{c}_k$ and the ideal response coefficient $c_k$:

\begin{theorem}
    Given observations $\{(t_i,x_i)\}_{i=1}^{L}$, where $t_i \stackrel{\mathrm{i.i.d.}}{\sim} p(t)$,
    for any $\delta \in (0,1)$, when $L$ is sufficiently large,
    with probability at least $1-\delta$, the following inequality holds:
    \begin{align}
        \max_{k\in[K]} |\hat{c}_k-c_k|
        \le
        \frac{4m}{a}
        \sqrt{\frac{2\log(4K/\delta)}{L}}
        +
        \frac{2m}{a}\|p(t)-1\|_{L^1},\nonumber
    \end{align}
    where
    $\|p(t)-1\|_{L^1}=\int_{\mathcal{T}}|p(t)-1|dt$.
\end{theorem}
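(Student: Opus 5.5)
We split the error by inserting the population version of the discrete estimator. Write $x_i=x(t_i)$ (noiseless observations) and define the empirical and population quantities
\begin{align}
\hat A_k=\frac1L\sum_{i=1}^L x_i\phi_k(t_i),\quad \hat B_k=\frac1L\sum_{i=1}^L \phi_k(t_i),\nonumber\\
A_k^p=\int_{\mathcal T}x\phi_k p\,dt,\quad B_k^p=\int_{\mathcal T}\phi_k p\,dt,\nonumber
\end{align}
together with their uniform-density analogues $A_k=\int x\phi_k\,dt$ and $B_k=\int\phi_k\,dt$. Then $\hat c_k=\hat A_k/\hat B_k$ and $c_k=A_k/B_k$. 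Let $c_k^p=A_k^p/B_k^p$. By the triangle inequality,
\[
|\hat c_k-c_k|\le|\hat c_k-c_k^p|+|c_k^p-c_k|.
\]
The first term is a stochastic error, which we control by concentration. The second is a deterministic bias, which we control by $\|p-1\|_{L^1}$.

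\textbf{Bias term.} We use the ratio identity
\[
\frac{A'}{B'}-\frac{A}{B}=\frac{A'-A}{B'}+\frac{A(B-B')}{BB'}.
\]
We have $|A_k^p-A_k|\le\int|x||\phi_k||p-1|\le m\|p-1\|_{L^1}$, and likewise $|B_k^p-B_k|\le m\|p-1\|_{L^1}$. Also $|A_k/B_k|\le1$ since $|x|\le1$ and $\phi_k\ge0$. With $B_k^p\ge a$ from Assumption~\ref{ass1}, the identity gives $|c_k^p-c_k|\le \frac{m}{a}\|p-1\|_{L^1}+\frac{m}{a}\|p-1\|_{L^1}=\frac{2m}{a}\|p-1\|_{L^1}$.

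\textbf{Stochastic term.} The summands $x(t_i)\phi_k(t_i)$ lie in $[-m,m]$, and $\phi_k(t_i)$ lies in $[0,m]$. Hoeffding's inequality gives $|\hat A_k-A_k^p|\le m\sqrt{2\log(4K/\delta)/L}$ with probability $\ge1-\delta/(2K)$. The same bound holds for $\hat B_k$, and in fact the range $m$ gives an even tighter constant there. A union bound over the $2K$ events makes all of them hold simultaneously with probability $\ge1-\delta$.

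On this event, take $L$ large enough that $m\sqrt{2\log(4K/\delta)/L}\le a/2$. This is the step where the clause ``$L$ sufficiently large'' is used. It gives $\hat B_k\ge a/2$. Apply the same ratio identity with $A'=\hat A_k$ and $B'=\hat B_k$ against $A_k^p/B_k^p$, using $|A_k^p/B_k^p|\le1$:
\[
|\hat c_k-c_k^p|\le \frac{|\hat A_k-A_k^p|+|\hat B_k-B_k^p|}{\hat B_k}\le\frac{2}{a}\cdot 2m\sqrt{\frac{2\log(4K/\delta)}{L}}.
\]
This is exactly $\frac{4m}{a}\sqrt{2\log(4K/\delta)/L}$. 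Taking the maximum over $k$ and adding the bias bound yields the theorem.

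The main obstacle is the random denominator $\hat B_k$. The ratio cannot be bounded until $\hat B_k$ is bounded away from zero, so we must first establish the lower bound $\hat B_k\ge a/2$ using the same concentration event and the largeness of $L$. After that, the constants $4m/a$ and $2m/a$ follow directly from matching the Hoeffding widths with the factor $2/a$ coming from $1/\hat B_k$.
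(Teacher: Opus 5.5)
Your proof is correct and follows essentially the same route as the paper. Both arguments use the triangle-inequality split through $c_k^p$, a Hoeffding-plus-union bound over $2K$ events with width $m\sqrt{2\log(4K/\delta)/L}$, the lower bound $\hat B_k\ge a/2$ once $L\ge \frac{8m^2}{a^2}\log\frac{4K}{\delta}$, and the same ratio decomposition using $|A_k^p/B_k^p|\le 1$ and $|A_k/B_k|\le 1$; the paper only swaps the roles of the letters $A$ and $B$ for numerator and denominator.
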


\begin{proof}
    See Appendix~A.
\end{proof}

\begin{figure*}
    \centering
    \includegraphics[width=1\linewidth]{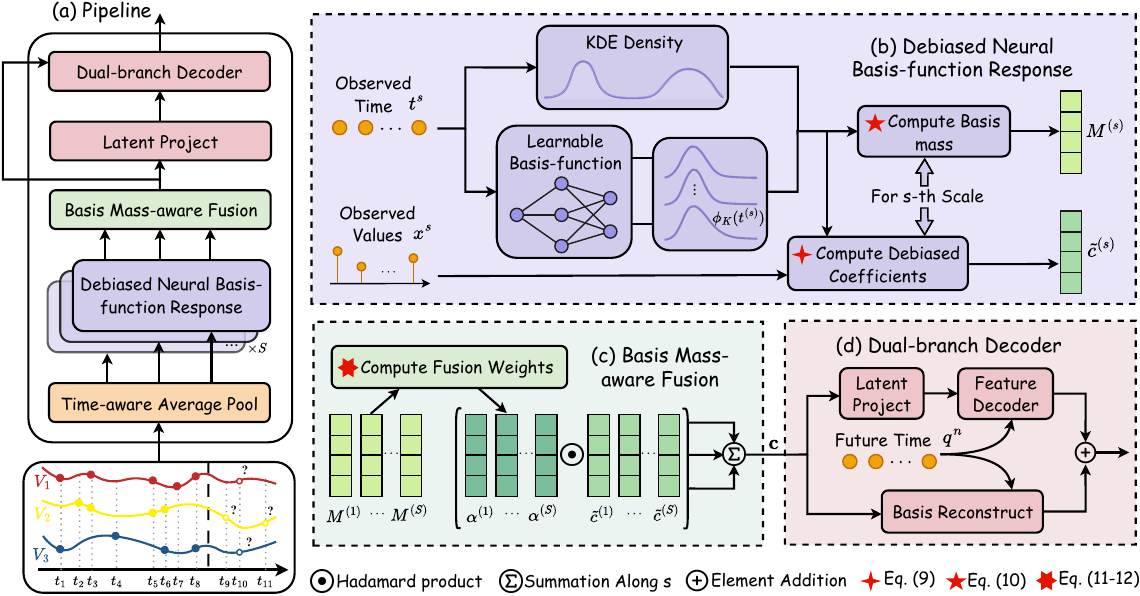}
    \caption{
    (a) The overall pipeline of DNBNet.
    (b) The debiased neural basis-function response computes debiased coefficients and basis masses using KDE and learnable basis functions.
    (c) The basis mass-aware fusion module aggregates coefficients from different scales.
    (d) The final prediction is obtained by combining the feature decoder with basis-function reconstruction.}
    \label{fig:pipeline}
\end{figure*}

The first term on the right-hand side of the above bound decreases as the number of observations $L$ increases, reflecting the standard finite-sample estimation error. In contrast, the second term is independent of $L$ and does not vanish with more observations unless the timestamp density $p(t)$ approaches the uniform distribution ($p(t)=1$), i.e., the observations are obtained through regular sampling. \textbf{Therefore, this term characterizes the potential asymptotic bias induced by irregular sampling, which does not vanish as L increases.} This observation motivates the need for a debiased response estimation mechanism that explicitly considers the sampling density of observed timestamps.

\section{Debiased Neural Basis-Function Network}
In this section, we introduce the proposed DNBNet in detail. As shown in Fig.~\ref{fig:pipeline}, we first generate observation subsequences at different scales through time-aware average pooling. We then design a debiased neural basis response mechanism to compute debiased and more expressive response coefficients. Next, we introduce a mass-aware fusion mechanism to integrate multi-scale response information. Finally, a dual decoder is employed to produce the final predictions.

\subsection{Time-aware Average Pool}
The underlying dynamics of irregular time series often exhibit heterogeneous temporal patterns across different time scales, while the inherent sparsity of the original sequence limits its ability to provide sufficient information for feature learning. This motivates us to construct subsequences at different scales to capture temporal patterns of varying granularities and obtain richer representations through fusion. Therefore, we first utilize time-aware average pooling to obtain the subsequence of the original input sequence $\mathcal{O}=\{(t_i,x_i)\}_{i=1}^{L}$ at the $s$-th scale:
\begin{align} 
    \mathcal{O}^{(s)}
    &=
    \text{AvgPool}\left(\mathcal{O} ; w_s,r_s\right), \\
    \mathcal{O}^{(s)}
    &=
    \{(t_i^{(s)},x_i^{(s)})\}_{i=1}^{L_s},
\end{align}
where $w_s$ and $r_s$ respectively denote the window size and stride defined by absolute time spans at scale $s$, and we let $\mathcal{O}^{(1)}$ denotes the sequence of original scale. 
Note that our model learns representation mainly in a channel-independent manner. For convenience, we temporarily omit the variate index $n$ in this section.

\subsection{Debiased Neural Basis-function Response}
According to the previous analysis, existing discrete computation methods suffer from asymptotic bias under irregular sampling. To correct this bias, we employ importance sampling to construct a density-corrected estimator  of the original integral:
\begin{equation}
\label{eq2}
        \scalebox{1}{$\displaystyle 
        \begin{aligned}
            \frac{\int_{\mathcal{T}} \frac{x(t)\phi_k(t)}{p(t)}p(t)dt}
             {\int_{\mathcal{T}} \frac{\phi_k(t)}{p(t)}p(t)dt} 
             \!=\! 
             \frac{\mathbb{E}_{t}\left[\frac{x(t)\phi_k(t)}{p(t)}\right]}
             {\mathbb{E}_{t}\left[\frac{\phi_k(t)}{p(t)}\right]} 
             \!\approx\! 
             \frac{\sum_{i=1}^{L}\frac{x_i\phi_k(t_i)}{p(t_i)}}
             {\sum_{i=1}^{L}\frac{\phi_k(t_i)}{p(t_i)}} 
             \!=\! \tilde c_k . 
        \end{aligned}
        $}
\end{equation}
The proposed estimator asymptotically corrects the bias. A detailed explanation of the debiasing property of our estimates is provided in the Appendix~A.2. 
Note that $\hat{c}_k$ of the existing methods is actually a special case of our formulation $\tilde{c}_k$ under the uniform timestamp distribution, i.e., $p(t)=1$. In irregular sampling scenarios, $1/p(t)$ can be interpreted as an importance weight that explicitly corrects the sampling bias: it suppresses the overemphasis on densely sampled regions while compensating for contributions from sparsely sampled regions. In our implementation, we use Kernel Density Estimation (KDE) to estimate the timestamp density:
\begin{align}
    p(t_i)=\frac{1}{L h}\sum_{j=1}^{L}\kappa\left(\frac{t_i-t_j}{h}\right),
\end{align}
where $\kappa(\cdot)$ denotes the kernel function, for which we adopt the Gaussian kernel. We further parameterize the bandwidth as a learnable variable, i.e., $h=\text{softplus}(\rho)$, to adapt to different temporal scales. 

In addition, as shown in Table~\ref{tab:basis_function}, many existing methods rely on predefined basis functions. The choice of basis functions usually reflects prior assumptions about the characteristics of the data. However, in the real scenarios, a fixed set of basis functions may be difficult to adapt to diverse datasets due to the changes of data distribution. 
Meanwhile, selecting appropriate basis functions for different datasets is highly dependent on domain expert knowledge, further limiting the generalization and applicability of the method.
Therefore, considering the strong approximation ability of neural networks, we directly learn basis functions themselves from raw data by fully connected neural networks. Specifically, let $\boldsymbol{\phi}(t) = \left[\phi_1(t), \ldots, \phi_K(t) \right] \in \mathbb{R}^K$ denote the vector formed by concatenating $K$ basis functions. We parameterize it with a two-layer MLP:
\begin{align}
\label{eq10}
    \boldsymbol{\phi}(t)=\text{softmax}(\text{MLP}_{\text{basis}}(t)),
\end{align}
where softmax is to bound basis functions, consistent with Assumption~\ref{ass1}. It also normalizes the responses across basis functions, thereby improving the stability of computation.

Next, we incorporate the learnable basis-function $\phi(t)$ into the debiased estimation in Eq.~\eqref{eq2}. For each scale-specific sequence $\mathcal{O}^{(s)}$, we compute the corresponding basis response coefficient with respect to the $k$-th basis function:
\begin{align}
\label{eq3}
    \tilde c_k^{(s)}
    =
    \frac{
    \sum_{i=1}^{L_s}
    x_{i}^{(s)}\phi_k(t_i^{(s)})/ p(t_i^{(s)})
    }{
    \sum_{i=1}^{L_s}
    \phi_k(t_i^{(s)})/ p(t_i^{(s)})
    },
\end{align}
In this way, each scale produces a set of basis response coefficients $\{\tilde{c}_k^{(s)}\}_{k=1}^{K}$.

\subsection{Basis Mass-aware Fusion}
To obtain richer representations, we further fuse the basis response coefficients across different scales. Note that the denominator in Eq.~\eqref{eq3} can be viewed as the basis mass of the $k$-th basis function at scale $s$:
\begin{align}
    M_k^{(s)}=\sum_{i=1}^{L_s} \phi_k(t_i^{(s)}) / p (t_i^{(s)}).
\end{align}
A larger $M_k^{(s)}$ indicates that the corresponding basis function receives stronger support at scale $s$, and thus its coefficient may contain more informative signals. Based on this observation, we design a mass-aware fusion mechanism to adaptively aggregate coefficients from different scales:
\begin{align}
    r_{s,k}
    &=
    \tau_{s,k}\log(1+M_k^{(s)})+\beta_{s,k}, \\
    \alpha_{s,k}
    &=
    \text{softmax} \left(\{ r_{s,k}\}_{s=1}^S\right), \\
    \tilde c_k
    &=
    \sum_{s=1}^{S} \alpha_{s,k} \tilde c_k^{(s)},
\label{eq4}
\end{align}
where $\tau_{s,k}$ and $\beta_{s,k}$ are learnable scale and bias parameters. The logarithmic transformation prevents excessively large basis masses from dominating the fusion weights.
Finally, we concatenate the fused coefficients of all basis functions as $\mathbf{c}=[\tilde c_1,\ldots, \tilde c_K] \in \mathbb{R}^{K}$, and further project it into a latent space to enhance its representation capacity:
\begin{align}
    \mathbf{h}
    &=
    \text{Linear}(\mathbf{c}), \\
    \mathbf{z}
    &=
    \text{LayerNorm}
    \left(
    \mathbf{h}
    +
    \text{MLP}(\mathbf{h})
    \right).
\end{align}
The representation $\mathbf{z} \in \mathbb{R}^d$ encodes debiased multi-scale basis responses for the subsequent forecasting decoder.

\subsection{Dual-Branch Forecasting Decoder}
To exploit both the latent representation and basis coefficients, we design a dual-branch forecasting decoder consisting of a feature decoder and a basis reconstruction branch.

Given a future query timestamp $q_l^n$ from the $n$-th variate at the $l$-th prediction step, following existing methods~\cite{DBLP:conf/icml/ZhangYL0024}, we construct its time embedding as follows:
\begin{align}
    \text{TE}(q_l^n)[d]
    =
    \begin{cases}
        \omega_0 \cdot q_l^n + b_0, & \text{if } d=0, \\
        \sin(\omega_d \cdot q_l^n + b_d), & \text{if } 0<d \leq D_t .
    \end{cases}
\end{align}
where $\{\omega_0,b_0\}$ and $\{\omega_d,b_d\}_{d=1}^{D_t}$ are learnable parameters for the linear and periodic components, respectively. To distinguish different variates, we add a variate embedding $\mathbf{e}_n$ to obtain $\mathbf{z}^n = \mathbf{z} + \mathbf{e}_n$. The feature-branch prediction is then computed by
\begin{align}
    \hat{y}_{l}^{n,\text{fea}}
    =
    \text{MLP}_{\text{fea}}
    \left(
    \left[
    \mathbf{z}^n,
    \text{TE}(q_l^n)
    \right]
    \right).
\end{align}

The basis functions provide a natural way to reconstruct the signal from the learned coefficients. This allows us to explicitly extrapolate future values by evaluating the learned basis functions at query timestamps and combining them with the basis coefficients. Therefore, in the basis branch, the prediction for the $n$-th variate at query timestamp $q_l^n$ is computed as
\begin{align}
    \hat{y}_{l}^{n,\text{basis}}
    =
    \sum_{k=1}^{K}
    \tilde c_k \phi_k(q_l^n),
\end{align}
where $\tilde c_k$ denotes the fused coefficient in Eq.~\eqref{eq4} and $\boldsymbol{\phi}(t)$ is the same learnable basis-function in Eq.~\eqref{eq10}. Finally, we fuse the two branches via a learnable gate:
\begin{align}
    \hat{y}_{l}^{n}
    =
    \lambda \, \hat{y}_{l}^{n,\text{fea}}
    +
    (1-\lambda) \, \hat{y}_{l}^{n,\text{basis}},
\end{align}
where $\lambda = \text{sigmoid}(\gamma)$ is a learnable parameter that balances the contribution of the feature and basis branch.

\section{Experiments}

\subsection{Experimental Setup}
\subsubsection{Datasets and Baselines}
To evaluate our method on irregular multivariate time series forecasting, we conduct experiments on five widely used benchmark datasets across multiple real-world domains. Specifically, we consider two healthcare datasets, PhysioNet and MIMIC, containing ICU clinical records; two biomechanics-related datasets, Human Activity and Student Life, consisting of sensor data from subjects performing various activities; and one climate dataset, USHCN, containing historical meteorological observations from U.S. weather stations. Following common practice, each dataset is split into training, validation, and test sets with proportions of $80\%$, $10\%$, and $10\%$, respectively.
We compare our method with 12 baseline models covering IMTS forecasting, classification, and interpolation, including PrimeNet~\cite{chowdhury2023primenet}, SeFT~\cite{horn2020set}, mTAN~\cite{DBLP:conf/iclr/ShuklaM21}, CRU~\cite{schirmer2022modeling}, GNeuralFlow~\cite{DBLP:conf/nips/MercataliFC24}, Raindrop~\cite{DBLP:conf/iclr/ZhangZTZ22}, tPatchGNN~\cite{DBLP:conf/icml/ZhangYL0024}, GraFITi~\cite{yalavarthi2024grafiti}, Warpformer~\cite{zhang2023warpformer}, Hi-Patch~\cite{DBLP:conf/icml/LuoZ0025}, KAFNet~\cite{zhou2026revitalizing}, and APN~\cite{liu2026rethinking}. Additional details on datasets and baselines are provided in Appendix~B.1.

\begin{table*}[t]
\centering
\setlength{\tabcolsep}{5pt}
\renewcommand{\arraystretch}{1.35}
\normalsize
\caption{MSE results of DNBNet and baselines on five irregular time-series datasets. Best results are in bold and second-best results are underlined.}
\begin{tabular}{lccccc}
\toprule
\textbf{Method} & \textbf{USHCN} & \textbf{Human Activity} & \textbf{Student Life} & \textbf{PhysioNet} & \textbf{MIMIC} \\
\midrule
PrimeNet & $0.7328\pm0.0000$ & $4.2564\pm0.0007$ & $0.8799\pm0.0001$ & $0.7952\pm0.0000$ & $0.9728\pm0.0001$ \\
SeFT & $0.6647\pm0.0008$ & $1.3750\pm0.0036$ & $0.8747\pm0.0026$ & $0.7801\pm0.0026$ & $0.9905\pm0.0003$ \\
mTAN & $0.4197\pm0.0161$ & $0.1017\pm0.0049$ & $0.6555\pm0.0064$ & $0.3787\pm0.0057$ & $0.9719\pm0.0169$ \\
CRU & $0.5179\pm0.0113$ & $0.1452\pm0.0009$ & $0.7487\pm0.0055$ & $0.6169\pm0.0036$ & $0.7370\pm0.0059$ \\
GNeuralFlow & $0.4971\pm0.0093$ & $0.2550\pm0.0373$ & $0.7168\pm0.0422$ & $0.3516\pm0.0028$ & $0.6631\pm0.0112$ \\
Raindrop & $0.4841\pm0.0064$ & $0.0939\pm0.0017$ & $0.6672\pm0.0054$ & $0.3491\pm0.0015$ & $0.6219\pm0.0040$ \\
tPatchGNN & $0.5499\pm0.1068$ & $0.0578\pm0.0007$ & $0.6332\pm0.0023$ & $0.3097\pm0.0014$ & $0.4673\pm0.0016$ \\
GraFITi & $0.4270\pm0.0093$ & $0.0606\pm0.0029$ & $0.6366\pm0.0001$ & $\mathbf{0.3021\pm0.0011}$ & $\underline{0.4349\pm0.0070}$ \\
Warpformer & $0.4263\pm0.0073$ & $0.0576\pm0.0009$ & $\underline{0.6267\pm0.0066}$ & $0.3153\pm0.0011$ & $0.4358\pm0.0018$ \\
Hi-Patch & $\underline{0.4108\pm0.0049}$ & $0.0592\pm0.0003$ & $0.6277\pm0.0010$ & $0.3226\pm0.0015$ & $0.4724\pm0.0067$ \\
KAFNet & $0.4132\pm0.0060$ & $0.0572\pm0.0002$ & $0.6312\pm0.0009$ & $0.3328\pm0.0015$ & $0.4612\pm0.0073$ \\
APN & $0.4262\pm0.0030$ & $\underline{0.0562\pm0.0004}$ & $0.6512\pm0.0015$ & $0.3133\pm0.0005$ & $0.4527\pm0.0093$ \\
\midrule
\rowcolor{gray!8}
DNBNet (Ours) & $\mathbf{0.4013\pm0.0108}$ & $\mathbf{0.0551\pm0.0001}$ & $\mathbf{0.6136\pm0.0002}$ & $\underline{0.3089\pm0.0005}$ & $\mathbf{0.4289\pm0.0012}$ \\
\bottomrule
\end{tabular}
\label{tab:mse_results}
\end{table*}

\begin{table}[t]
\centering
\setlength{\tabcolsep}{2pt}
\renewcommand{\arraystretch}{1.25}
\caption{Ablation study of DNBNet on three irregular time-series datasets, with MSE reported.}
\normalsize
\begin{adjustbox}{max width=\columnwidth}
\begin{tabular}{lccc}
\toprule
\textbf{Variant} & \textbf{USHCN} & \textbf{PhysioNet} & \textbf{MIMIC} \\
\midrule
w/o $p(t)$ & $0.4449\pm0.0291$ & $0.3161\pm0.0003$ & $0.4485\pm0.0012$ \\
Rep. RBF & $0.4144\pm0.0091$ & $0.3231\pm0.0007$ & $0.4507\pm0.0009$ \\
Rep. Fourier & $0.4101\pm0.0088$ & $0.3267\pm0.0002$ & $0.4869\pm0.0048$ \\
w/o AvgPool & $0.4195\pm0.0166$ & $0.3119\pm0.0006$ & $0.4494\pm0.0012$ \\
w/o BasDec & $0.4552\pm0.0424$ & $0.3092\pm0.0002$ & $0.4613\pm0.0023$ \\
\midrule
DNBNet (Full) & $\mathbf{0.4013\pm0.0108}$ & $\mathbf{0.3089\pm0.0005}$ & $\mathbf{0.4289\pm0.0012}$ \\
\bottomrule
\end{tabular}
\end{adjustbox}
\label{tab:balation}
\end{table}

\begin{figure}
    \centering
    \begin{subfigure}[b]{0.23\textwidth} 
        \centering
        \includegraphics[width=\textwidth]{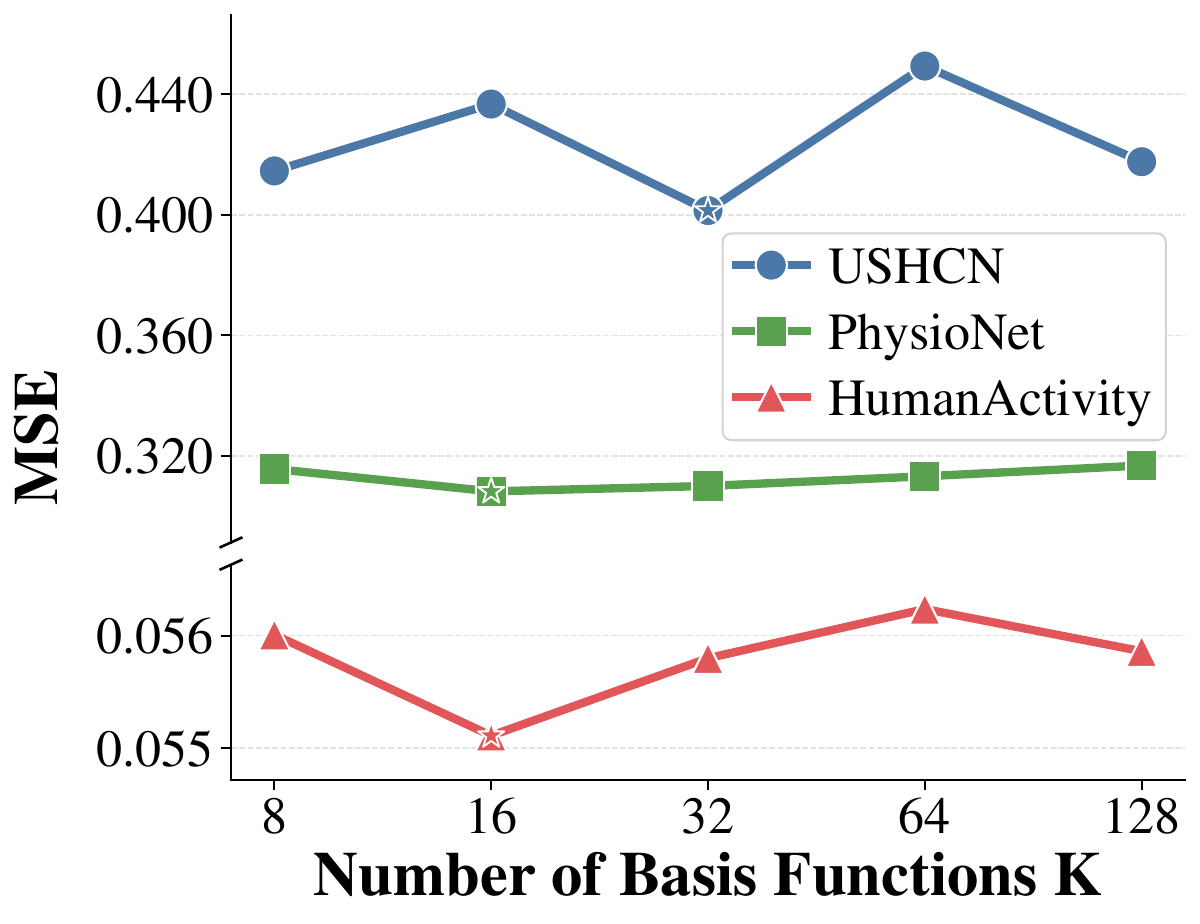} 
        \caption{MSE on three datasets w.r.t. different $K$.}
    \end{subfigure}
    \hfill 
    \begin{subfigure}[b]{0.23\textwidth}
        \centering
        \includegraphics[width=\textwidth]{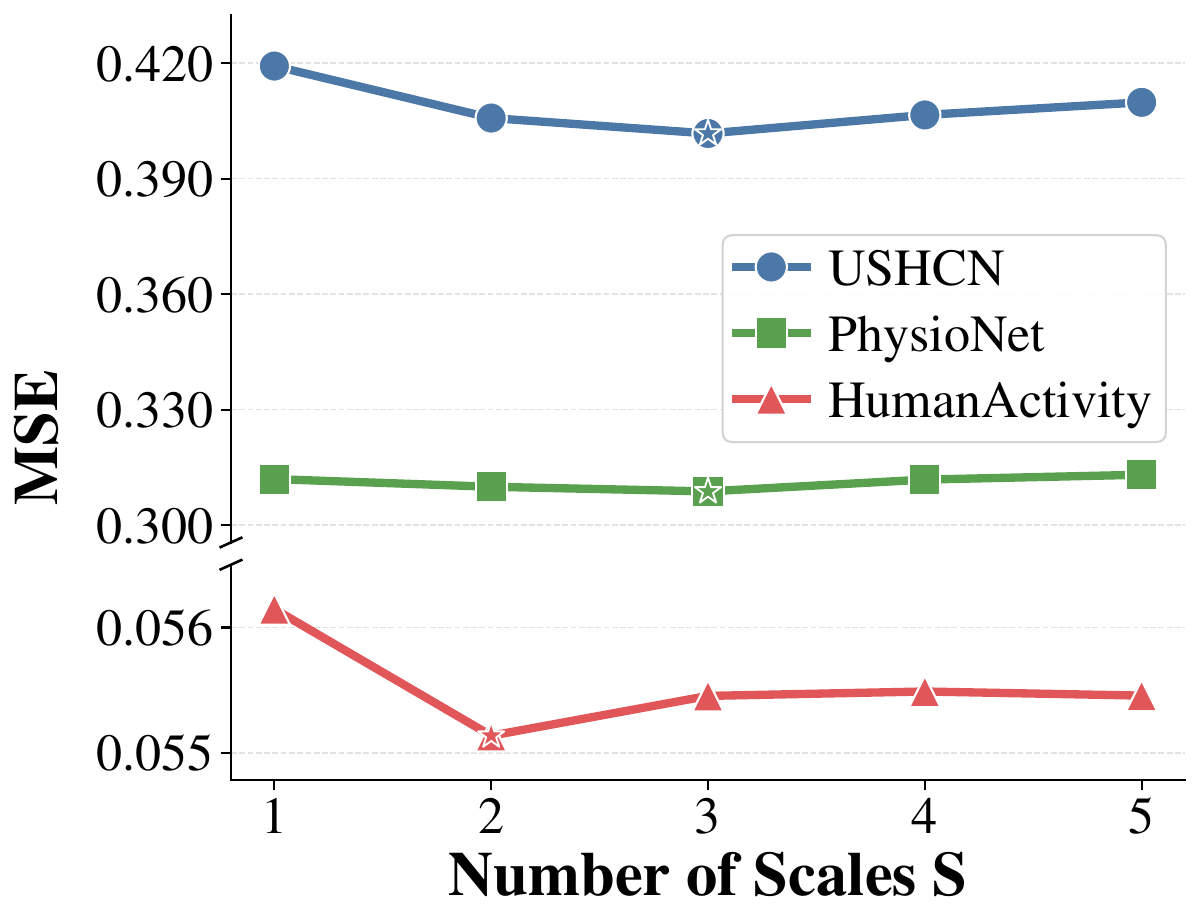}
        \caption{MSE on three datasets w.r.t. different $S$.}
    \end{subfigure}
    \caption{Parameter sensitivity study of number of basis functions $K$ and Scales $S$ on three datasets.}
    \label{fig:sensitive}
\end{figure}

\subsubsection{Implementation Details}
Following the prior work~\cite{DBLP:conf/nips/BilosSRJG21,DBLP:conf/icml/LuoZ0025}, for USHCN, we use 3 years of historical observations to predict the next 1 year; for Human Activity, the look-back window and prediction horizon are set to 3000 ms and 1000 ms, respectively; for Student Life, we use observations from the past 30 days to predict the next 30 days; and for PhysioNet and MIMIC, we use observations from the past 36 hours to predict the next 3 observation points. All experiments are implemented in PyTorch 2.5.1+cu121 and conducted on a single NVIDIA 4090 GPU. During training, all models are optimized using the AdamW optimizer, with mean squared error (MSE) adopted as the training objective. We set the maximum number of training epochs to 200 and use an early stopping patience of 10 to prevent overfitting. Mean squared error and mean absolute error (MAE) are used as evaluation metrics. Each experiment is repeated with five random seeds, and we report the mean and standard deviation of the results. 

\subsection{Main results}
Due to space limitations, we report the MSE results of our method and all baselines across all datasets in Table~\ref{tab:mse_results}, while the MAE results are provided in Appendix~B.2. As shown in the table, DNBNet achieves the best performance on four of the five datasets and obtains the second-best result on PhysioNet. In particular, compared with Warpformer, the average second-best baseline, DNBNet achieves an average relative MSE reduction of approximately $3.18\%$. Furthermore, compared with methods that also utilize basis-function based modeling, such as mTAN, KAFNet, and APN, DNBNet consistently achieves leading results on datasets from different domains, including healthcare and climate field. This indicates that, compared with directly relying on predefined basis functions, the proposed learnable basis functions can more effectively adapt to differences in sampling frequency or temporal irregularity across datasets, which is consistent with our motivation. In addition, we provide experimental results and analyses under different look-back windows and prediction horizons in the Appendix~B.3, further validating the stable advantages of DNBNet.

\subsection{Ablation Study}
We conduct ablation studies on three irregular time-series datasets to verify the effectiveness of our model components. The main results are reported in Table~\ref{tab:balation}, with complete results provided in the Appendix~B.4. Specifically, \textbf{(1)} w/o $p(t)$ removes the density factor in Eq.~\eqref{eq3}, leading to consistent performance degradation and confirming the effectiveness of our debiased response mechanism. \textbf{(2)} Rep. RBF and Rep. Fourier replace our learnable basis functions $\phi(t)$ with RBF and Fourier bases, respectively, both yielding inferior performance on all datasets. This suggests that our learnable basis-function is more robust and generalizable than predefined bases across different datasets. \textbf{(3)} w/o AvgPool and w/o BasDec remove the average pooling module and the basis-function prediction branch, respectively. The consistent performance degradation validates the effectiveness of leveraging multi-scale features and further demonstrates the potential of basis-function coefficients in representing future continuous dynamics. \textbf{(4)} To further validate the effectiveness of the debiased mechanism, we introduce the density debiasing factor $p(t)$ into two existing basis-function-based methods, KAFNet and APN. Specifically, we replace their biased weighted average responses with the estimator in Eq.~\eqref{eq2}. Table~\ref{tab:debiasing_factor} reports the original results of KAFNet and APN, as well as their performance after incorporating $p(t)$ on three datasets. Compared with the original biased weighted average aggregation, simply dividing by $p(t)$ yields consistent and stable improvements with almost no additional complexity. This verifies the effectiveness of our debiasing analysis.

\subsection{Parameter Sensitivity}
In this subsection, we study the impact of the number of basis functions and the multi-scale aggregation setting on model performance. As shown in Figure~\ref{fig:sensitive}, we report the results with different numbers of basis functions $K$ and scales $S$ on three datasets. We observe that using more basis functions does not necessarily lead to better performance; relatively small values, such as $K=16$ or $K=32$, are often sufficient to achieve the best or competitive results. Similarly, aggregating features from $2$ or $3$ scales already yields strong predictive performance, while further increasing the number of scales does not bring consistent gains. These results indicate that DNBNet is not sensitive to key hyperparameters and can maintain stable performance over a wide range of settings, providing practical guidance for transferring it to other datasets or application scenarios.

\subsection{Visualization and Efficiency Analysis}

To better understand the basis functions learned from data, we visualize the top four basis functions $\phi(t)$ with the largest mass on PhysioNet and MIMIC in Figure~\ref{fig:visualization}. For PhysioNet, the model learns localized basis functions similar to RBFs. Notably, each learned basis function has a distinct center, bandwidth, and peak location, suggesting that the model can adaptively adjust its local response range, which is difficult to achieve with predefined RBF bases. For the more challenging MIMIC dataset, DNBNet learns more complex and diverse basis-function shapes, including the soft window function $\phi_{12}$ and sigmoid-like functions $\phi_{0}$, $\phi_{8}$, and $\phi_{13}$. These basis functions capture response patterns over different temporal regions, further showing that DNBNet can learn temporal basis representations suited to the data distribution and dynamics, rather than being constrained by fixed predefined forms.
Figure~\ref{fig:efficiency} further compares the computational efficiency of DNBNet with six competitive baselines on MIMIC, the largest dataset, under the same batch size. DNBNet achieves the best forecasting performance while introducing nearly the lowest computational cost, except for APN. This efficiency mainly comes from avoiding quadratic-complexity operations and instead extracting temporal dynamics through learnable basis functions, resulting in a lightweight and efficient model design.

\begin{table}[t]
\centering
\setlength{\tabcolsep}{2.5pt}
\renewcommand{\arraystretch}{1.35}
\normalsize
\caption{Effectiveness of the density debiasing factor $1/p(t)$ on existing basis-function-based methods.}
\begin{adjustbox}{max width=\columnwidth}
\begin{tabular}{lccc}
\toprule
\textbf{Method} & \textbf{USHCN} & \textbf{PhysioNet} & \textbf{MIMIC} \\
\midrule
KAFNet w/o $p(t)$ & $0.4132 \pm 0.0060$ & $0.3328 \pm 0.0015$ & $0.4612 \pm 0.0073$ \\
KAFNet w/ $p(t)$  & $\mathbf{0.4050 \pm 0.0173}$ & $\mathbf{0.3117 \pm 0.0007}$ & $\mathbf{0.4488 \pm 0.0024}$ \\
\midrule
APN w/o $p(t)$    & $0.4262 \pm 0.0030$ & $0.3133 \pm 0.0005$ & $0.4527 \pm 0.0093$ \\
APN w/ $p(t)$     & $\mathbf{0.4193 \pm 0.0156}$ & $\mathbf{0.3094 \pm 0.0020}$ & $\mathbf{0.4409 \pm 0.0019}$ \\
\bottomrule
\end{tabular}
\end{adjustbox}
\label{tab:debiasing_factor}
\end{table}

\begin{figure}
    \centering
    \begin{subfigure}[b]{0.23\textwidth} 
        \centering
        \includegraphics[width=\textwidth]{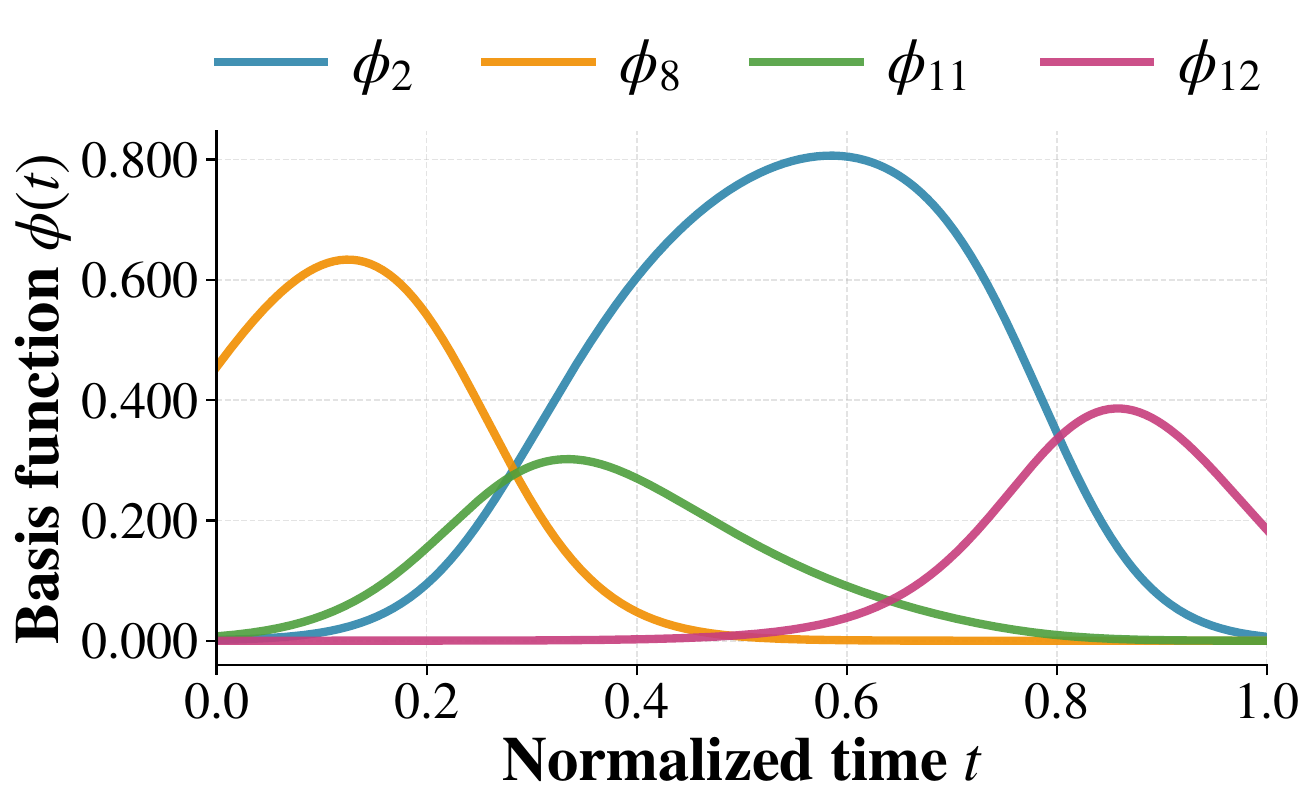} 
        \caption{The learned basis functions with the four largest basis function mass on PhysioNet dataset.}
    \end{subfigure}
    \hfill 
    \begin{subfigure}[b]{0.23\textwidth}
        \centering
        \includegraphics[width=\textwidth]{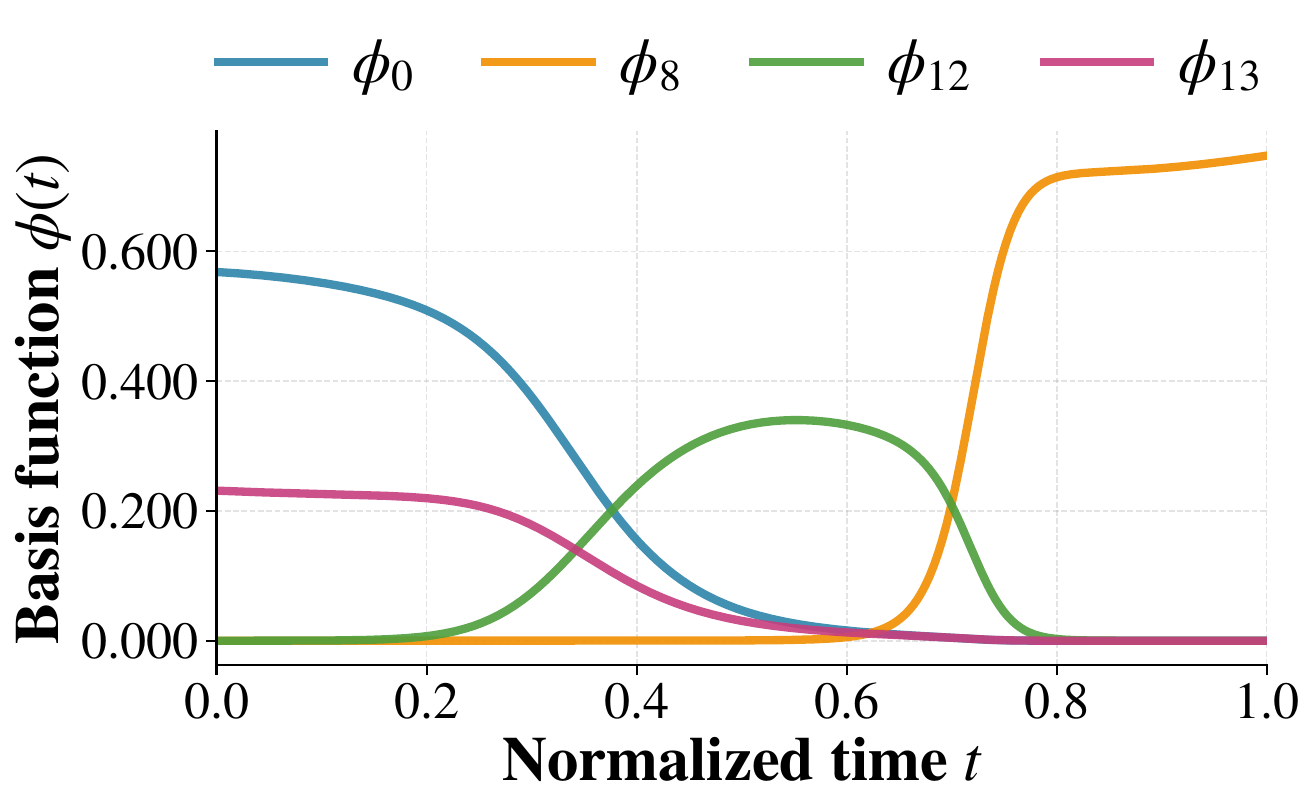}
        \caption{The learned basis functions with the four largest basis function mass on MIMIC dataset.}
    \end{subfigure}
    \caption{Visualization of the four learned basis functions with the largest basis function mass on two datasets.}
    \label{fig:visualization}
\end{figure}

\begin{figure}
    \centering
    \includegraphics[width=\linewidth]{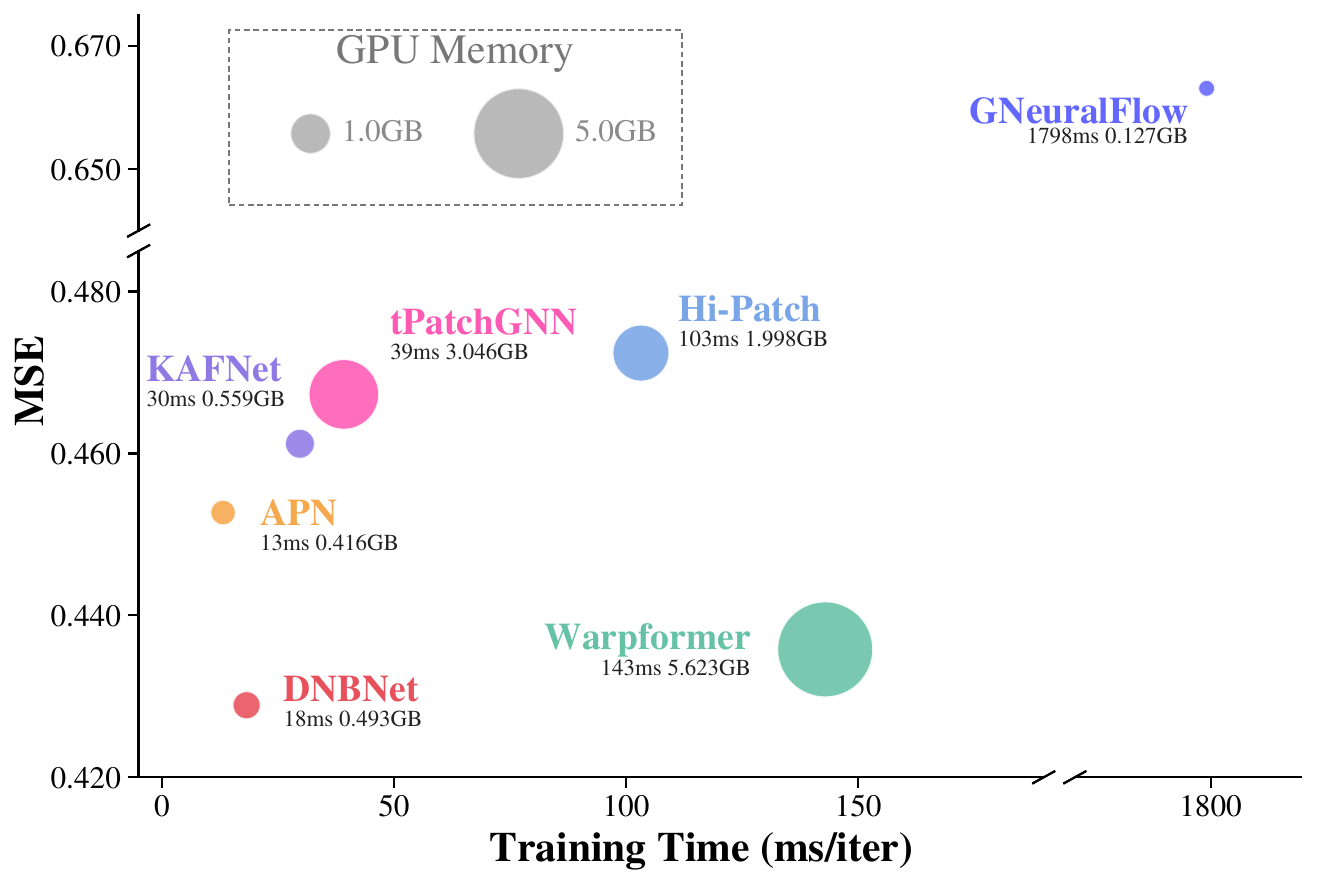}
    \caption{Model efficiency comparison with six competitive baselines on MIMIC dataset.}
    \label{fig:efficiency}
\end{figure}

\section{Conclusion}
We show that existing basis-based irregular time series forecasting methods can exhibit a non-vanishing asymptotic estimation bias when timestamp density is ignored, while their reliance on predefined basis functions may limit adaptability. To address these limitations, we propose the Debiased Neural Basis Function Network (DNBNet), a flexible framework that integrates density correction, learnable neural basis functions, multi-scale response extraction, and mass-aware fusion to enable robust and adaptive representation learning. DNBNet further employs a dual-branch decoder that combines implicit latent prediction with explicit basis-function reconstruction, thereby improving both predictive flexibility and reconstruction interpretability. Experiments on five real-world benchmarks demonstrate that DNBNet consistently achieves superior performance against strong baselines. 



\nocite{langley00}

\bibliography{main}

@inproceedings{DBLP:conf/aaai/GhassemiPNBCSF15,
  author       = {Marzyeh Ghassemi and
                  Marco A. F. Pimentel and
                  Tristan Naumann and
                  Thomas Brennan and
                  David A. Clifton and
                  Peter Szolovits and
                  Mengling Feng},
  title        = {A Multivariate Timeseries Modeling Approach to Severity of Illness
                  Assessment and Forecasting in {ICU} with Sparse, Heterogeneous Clinical
                  Data},
  booktitle    = {Proceedings of the Twenty-Ninth {AAAI} Conference on Artificial Intelligence,
                  January 25-30, 2015, Austin, Texas, {USA}},
  pages        = {446--453},
  year         = {2015},
}

@article{DBLP:journals/sensors/DecorteMLMLMV24,
  author       = {Thomas Decorte and
                  Steven Mortier and
                  Jonas J. Lembrechts and
                  Filip J. R. Meysman and
                  Steven Latr{\'{e}} and
                  Erik Mannens and
                  Tim Verdonck},
  title        = {Missing Value Imputation of Wireless Sensor Data for Environmental
                  Monitoring},
  journal      = {Sensors},
  volume       = {24},
  number       = {8},
  pages        = {2416},
  year         = {2024},
}

@article{afrifa2020missing,
  title={Missing data imputation of high-resolution temporal climate time series data},
  author={Afrifa-Yamoah, Eben and Mueller, Ute A and Taylor, Stephen M and Fisher, Aiden J},
  journal={Meteorological Applications},
  volume={27},
  number={1},
  pages={e1873},
  year={2020},
}

@inproceedings{DBLP:conf/iclr/ShuklaM21,
  author       = {Satya Narayan Shukla and
                  Benjamin M. Marlin},
  title        = {Multi-Time Attention Networks for Irregularly Sampled Time Series},
  booktitle    = {9th International Conference on Learning Representations, {ICLR} 2021,
                  Virtual Event, Austria, May 3-7, 2021},
  year         = {2021},
}

@inproceedings{zhou2026revitalizing,
  title={Revitalizing canonical pre-alignment for irregular multivariate time series forecasting},
  author={Zhou, Ziyu and Huang, Yiming and Wang, Yanyun and Wu, Yuankai and Kwok, James and Liang, Yuxuan},
  booktitle={Proceedings of the AAAI Conference on Artificial Intelligence},
  pages={29115--29123},
  year={2026}
}

@inproceedings{liu2026rethinking,
  title={Rethinking irregular time series forecasting: A simple yet effective baseline},
  author={Liu, Xvyuan and Qiu, Xiangfei and Wu, Xingjian and Li, Zhengyu and Guo, Chenjuan and Hu, Jilin and Yang, Bin},
  booktitle={Proceedings of the AAAI Conference on Artificial Intelligence},
  pages={23873--23881},
  year={2026}
}

@article{qiu2026bridging,
  title={Bridging Time and Frequency: A Joint Modeling Framework for Irregular Multivariate Time Series Forecasting},
  author={Qiu, Xiangfei and Yan, Kangjia and Liu, Xvyuan and Wu, Xingjian and Hu, Jilin},
  journal={arXiv preprint arXiv:2602.00582},
  year={2026}
}

@article{rubanova2019latent,
  title={Latent ordinary differential equations for irregularly-sampled time series},
  author={Rubanova, Yulia and Chen, Ricky TQ and Duvenaud, David K},
  journal={Advances in neural information processing systems},
  volume={32},
  year={2019}
}

@inproceedings{DBLP:conf/nips/BilosSRJG21,
  author       = {Marin Bilos and
                  Johanna Sommer and
                  Syama Sundar Rangapuram and
                  Tim Januschowski and
                  Stephan G{\"{u}}nnemann},
  title        = {Neural Flows: Efficient Alternative to Neural ODEs},
  booktitle    = {Advances in Neural Information Processing Systems 34: Annual Conference
                  on Neural Information Processing Systems 2021, NeurIPS 2021, December
                  6-14, 2021, virtual},
  pages        = {21325--21337},
  year         = {2021},
}

@inproceedings{horn2020set,
  title={Set functions for time series},
  author={Horn, Max and Moor, Michael and Bock, Christian and Rieck, Bastian and Borgwardt, Karsten},
  booktitle={International Conference on Machine Learning},
  pages={4353--4363},
  year={2020},
}

@inproceedings{yalavarthi2024grafiti,
  title={Grafiti: Graphs for forecasting irregularly sampled time series},
  author={Yalavarthi, Vijaya Krishna and Madhusudhanan, Kiran and Scholz, Randolf and Ahmed, Nourhan and Burchert, Johannes and Jawed, Shayan and Born, Stefan and Schmidt-Thieme, Lars},
  booktitle={Proceedings of the AAAI Conference on Artificial Intelligence},
  pages={16255--16263},
  year={2024}
}

@inproceedings{DBLP:conf/icml/LiL0ZL025,
  author       = {Boyuan Li and
                  Yicheng Luo and
                  Zhen Liu and
                  Junhao Zheng and
                  Jianming Lv and
                  Qianli Ma},
  title        = {HyperIMTS: Hypergraph Neural Network for Irregular Multivariate Time
                  Series Forecasting},
  booktitle    = {Forty-second International Conference on Machine Learning, {ICML}
                  2025, Vancouver, BC, Canada, July 13-19, 2025},
  year         = {2025},
}

@inproceedings{DBLP:conf/icml/ZhangYL0024,
  author       = {Weijia Zhang and
                  Chenlong Yin and
                  Hao Liu and
                  Xiaofang Zhou and
                  Hui Xiong},
  editor       = {Ruslan Salakhutdinov and
                  Zico Kolter and
                  Katherine A. Heller and
                  Adrian Weller and
                  Nuria Oliver and
                  Jonathan Scarlett and
                  Felix Berkenkamp},
  title        = {Irregular Multivariate Time Series Forecasting: {A} Transformable
                  Patching Graph Neural Networks Approach},
  booktitle    = {Forty-first International Conference on Machine Learning, {ICML} 2024,
                  Vienna, Austria, July 21-27, 2024},
  pages        = {60179--60196},
  year         = {2024},
}

@inproceedings{DBLP:conf/nips/MercataliFC24,
  author       = {Giangiacomo Mercatali and
                  Andr{\'{e}} Freitas and
                  Jie Chen},
  title        = {Graph Neural Flows for Unveiling Systemic Interactions Among Irregularly
                  Sampled Time Series},
  booktitle    = {Advances in Neural Information Processing Systems 37: Annual Conference
                  on Neural Information Processing Systems 2024, NeurIPS 2024, Vancouver,
                  BC, Canada, December 10 - 15, 2024},
  year         = {2024},
}

@inproceedings{oh2026flowpath,
  title={FlowPath: Learning Data-Driven Manifolds with Invertible Flows for Robust Irregularly-sampled Time Series Classification},
  author={Oh, YongKyung and Lim, Dong-Young and Kim, Sungil},
  booktitle={Proceedings of the AAAI Conference on Artificial Intelligence},
  pages={24594--24603},
  year={2026}
}

@inproceedings{DBLP:conf/iclr/ZhangZTZ22,
  author       = {Xiang Zhang and
                  Marko Zeman and
                  Theodoros Tsiligkaridis and
                  Marinka Zitnik},
  title        = {Graph-Guided Network for Irregularly Sampled Multivariate Time Series},
  booktitle    = {The Tenth International Conference on Learning Representations, {ICLR}
                  2022, Virtual Event, April 25-29, 2022},
  year         = {2022},
}

@inproceedings{DBLP:conf/icml/LuoZ0025,
  author       = {Yicheng Luo and
                  Bowen Zhang and
                  Zhen Liu and
                  Qianli Ma},
  title        = {Hi-Patch: Hierarchical Patch {GNN} for Irregular Multivariate Time
                  Series},
  booktitle    = {Forty-second International Conference on Machine Learning, {ICML}
                  2025, Vancouver, BC, Canada, July 13-19, 2025},
  year         = {2025},
}

@inproceedings{DBLP:conf/iclr/OreshkinCCB20,
  author       = {Boris N. Oreshkin and
                  Dmitri Carpov and
                  Nicolas Chapados and
                  Yoshua Bengio},
  title        = {{N-BEATS:} Neural basis expansion analysis for interpretable time
                  series forecasting},
  booktitle    = {8th International Conference on Learning Representations, {ICLR} 2020,
                  Addis Ababa, Ethiopia, April 26-30, 2020},
  year         = {2020},
}

@inproceedings{DBLP:conf/icml/ZhouMWW0022,
  author       = {Tian Zhou and
                  Ziqing Ma and
                  Qingsong Wen and
                  Xue Wang and
                  Liang Sun and
                  Rong Jin},
  title        = {FEDformer: Frequency Enhanced Decomposed Transformer for Long-term
                  Series Forecasting},
  booktitle    = {International Conference on Machine Learning, {ICML} 2022, 17-23 July
                  2022, Baltimore, Maryland, {USA}},
  pages        = {27268--27286},
  year         = {2022},
}

@inproceedings{DBLP:conf/kdd/QiuW0GH025,
  author       = {Xiangfei Qiu and
                  Xingjian Wu and
                  Yan Lin and
                  Chenjuan Guo and
                  Jilin Hu and
                  Bin Yang},
  title        = {{DUET:} Dual Clustering Enhanced Multivariate Time Series Forecasting},
  booktitle    = {Proceedings of the 31st {ACM} {SIGKDD} Conference on Knowledge Discovery
                  and Data Mining, V.1, {KDD} 2025, Toronto, ON, Canada, August 3-7,
                  2025},
  pages        = {1185--1196},
  year         = {2025},
}

@inproceedings{chowdhury2023primenet,
  title={Primenet: Pre-training for irregular multivariate time series},
  author={Chowdhury, Ranak Roy and Li, Jiacheng and Zhang, Xiyuan and Hong, Dezhi and Gupta, Rajesh K and Shang, Jingbo},
  booktitle={Proceedings of the AAAI Conference on Artificial Intelligence},
  pages={7184--7192},
  year={2023}
}

@inproceedings{schirmer2022modeling,
  title={Modeling irregular time series with continuous recurrent units},
  author={Schirmer, Mona and Eltayeb, Mazin and Lessmann, Stefan and Rudolph, Maja},
  booktitle={International conference on machine learning},
  pages={19388--19405},
  year={2022},
}

@inproceedings{zhang2023warpformer,
  title={Warpformer: A multi-scale modeling approach for irregular clinical time series},
  author={Zhang, Jiawen and Zheng, Shun and Cao, Wei and Bian, Jiang and Li, Jia},
  booktitle={Proceedings of the 29th ACM SIGKDD Conference on Knowledge Discovery and Data Mining},
  pages={3273--3285},
  year={2023}
}

@article{lee2026adaptive,
  title={Adaptive Time Encoding for Irregular Multivariate Time-Series Classification},
  author={Lee, Sangho and Min, Kyeongseo and Son, Youngdoo and Do, Hyungrok},
  journal={Advances in neural information processing systems},
  volume={38},
  pages={101795--101824},
  year={2026}
}

@article{johnson2016mimic,
  title={MIMIC-III, a freely accessible critical care database},
  author={Johnson, Alistair EW and Pollard, Tom J and Shen, Lu and Lehman, Li-wei H and Feng, Mengling and Ghassemi, Mohammad and Moody, Benjamin and Szolovits, Peter and Anthony Celi, Leo and Mark, Roger G},
  journal={Scientific data},
  volume={3},
  number={1},
  pages={1--9},
  year={2016},
  publisher={Nature Publishing Group}
}

@inproceedings{silva2012predicting,
  title={Predicting in-hospital mortality of icu patients: The physionet/computing in cardiology challenge 2012},
  author={Silva, Ikaro and Moody, George and Scott, Daniel J and Celi, Leo A and Mark, Roger G},
  booktitle={2012 computing in cardiology},
  pages={245--248},
  year={2012},
  organization={IEEE}
}

@techreport{menne2015long,
  title={Long-term daily and monthly climate records from stations across the contiguous United States (US Historical Climatology Network)},
  author={Menne, Matthew J and Williams Jr, Claude N and Vose, Russell S},
  year={2015},
  institution={Environmental System Science Data Infrastructure for a Virtual Ecosystem; CDIAC}
}

@article{nepal2024capturing,
  title={Capturing the college experience: a four-year mobile sensing study of mental health, resilience and behavior of college students during the pandemic},
  author={Nepal, Subigya and Liu, Wenjun and Pillai, Arvind and Wang, Weichen and Vojdanovski, Vlado and Huckins, Jeremy F and Rogers, Courtney and Meyer, Meghan L and Campbell, Andrew T},
  journal={Proceedings of the ACM on interactive, mobile, wearable and ubiquitous technologies},
  volume={8},
  number={1},
  pages={1--37},
  year={2024},
  publisher={ACM New York, NY, USA}
}

@inproceedings{DBLP:conf/icml/FonsSEFVV25,
  author       = {Elizabeth Fons and
                  Alejandro Sztrajman and
                  Yousef El{-}Laham and
                  Luciana Ferrer and
                  Svitlana Vyetrenko and
                  Manuela Veloso},
  title        = {{LSCD:} Lomb-Scargle Conditioned Diffusion for Time series Imputation},
  booktitle    = {Forty-second International Conference on Machine Learning, {ICML}
                  2025, Vancouver, BC, Canada, July 13-19, 2025},
  volume       = {267},
  year         = {2025},
}
\bibliographystyle{icml2026}

\newpage
\appendix








\section{Proof of Theorem~1}
For each basis function $\phi_k$, the ideal continuous-time response coefficient is defined as
\begin{align}
    c_k =
    \frac{
    \int_{\mathcal{T}} x(t)\phi_k(t)dt
    }{
    \int_{\mathcal{T}} \phi_k(t)dt
    }.
\end{align}
The uncorrected discrete estimator is defined as
\begin{align}
    \hat{c}_k =
    \frac{
    \sum_{i=1}^{L} x_i\phi_k(t_i)
    }{
    \sum_{i=1}^{L} \phi_k(t_i)
    }.
\end{align}
Since the timestamps are sampled from density $p(t)$, the discrete estimator does not directly approximate $c_k$. Instead, it approximates the response coefficient under the timestamp sampling density:
\begin{align}
    c_k^p =
    \frac{
    \int_{\mathcal{T}} x(t)\phi_k(t)p(t)dt
    }{
    \int_{\mathcal{T}} \phi_k(t)p(t)dt
    }.
\end{align}
Therefore, we decompose the error as
\begin{align}
    |\hat{c}_k-c_k|
    \le
    |\hat{c}_k-c_k^p|
    +
    |c_k^p-c_k|.
\end{align}
The first term corresponds to the finite-sample approximation error, while the second term corresponds to the sampling bias induced by the non-uniform timestamp density.

We first bound the finite-sample error. Define
\begin{align}
    Y_i^k=x(t_i)\phi_k(t_i),
    \qquad
    Z_i^k=\phi_k(t_i).
\end{align}
Then
\begin{align}
    \hat{c}_k=
    \frac{\bar{Y}_k}{\bar{Z}_k},
    \qquad
    c_k^p=
    \frac{\mathbb{E}_{p}[Y_i^k]}{\mathbb{E}_{p}[Z_i^k]},
\end{align}
where
\begin{align}
    \bar{Y}_k=\frac{1}{L}\sum_{i=1}^{L}Y_i^k,
    \qquad
    \bar{Z}_k=\frac{1}{L}\sum_{i=1}^{L}Z_i^k.
\end{align}
Since $|x(t)|\le 1$ and $0\le \phi_k(t)\le m$, we have
\begin{align}
    |Y_i^k|\le m,
    \qquad
    0\le Z_i^k\le m.
\end{align}
By Hoeffding's inequality and a union bound over all $k\in[K]$ for both
$\bar{Y}_k$ and $\bar{Z}_k$, with probability at least $1-\delta$, the following inequalities hold simultaneously:
\begin{align}
    \max_{k\in[K]}
    |\bar{Y}_k-\mathbb{E}_{p}[Y_i^k]|
    \le
    \epsilon_L,
    \qquad
    \max_{k\in[K]}
    |\bar{Z}_k-\mathbb{E}_{p}[Z_i^k]|
    \le
    \epsilon_L,
\end{align}
where
\begin{align}
    \epsilon_L
    =
    m\sqrt{\frac{2\log(4K/\delta)}{L}}.
\end{align}
When $L$ is sufficiently large such that $\epsilon_L\le a/2$, i.e., $L\ge\frac{8m^2}{a^2}\log\frac{4K}{\delta}$,
we have
\begin{align}
    \bar{Z}_k
    \ge
    \mathbb{E}_{p}[Z_i^k]-\epsilon_L
    \ge
    \frac{a}{2}.
\end{align}
Now we compare the two ratios:
\begin{align}
    |\hat{c}_k-c_k^p|
    &=
    \left|
    \frac{\bar{Y}_k}{\bar{Z}_k}
    -
    \frac{\mathbb{E}_{p}[Y_i^k]}{\mathbb{E}_{p}[Z_i^k]}
    \right| \nonumber \\
    &\le
    \frac{
    |\bar{Y}_k-\mathbb{E}_{p}[Y_i^k]|
    }{
    \bar{Z}_k
    }
    +
    |\mathbb{E}_{p}[Y_i^k]|
    \left|
    \frac{1}{\bar{Z}_k}
    -
    \frac{1}{\mathbb{E}_{p}[Z_i^k]}
    \right|. \nonumber\\
    &\le
    \frac{
    |\bar{Y}_k-\mathbb{E}_{p}[Y_i^k]|
    }{
    \bar{Z}_k
    }
    +
    \mathbb{E}_{p}[Z_i^k]
    \frac{
    |\bar{Z}_k-\mathbb{E}_{p}[Z_i^k]|
    }{
    \bar{Z}_k\mathbb{E}_{p}[Z_i^k]
    } \nonumber \\
    &\le
    \frac{2\epsilon_L}{a} + \frac{2\epsilon_L}{a} \nonumber\\
    &= \frac{4\epsilon_L}{a}.
\end{align}
The second inequality holds because $|\mathbb{E}_{p}[Y_i^k]|=\left|\int_{\mathcal{T}}x(t)\phi_k(t)p(t)dt\right| \le\int_{\mathcal{T}}\phi_k(t)p(t)dt=\mathbb{E}_{p}[Z_i^k]$.
Taking the maximum over all $k\in[K]$, we obtain
\begin{align}
\label{eq1}
    \max_{k\in[K]}
    |\hat{c}_k-c_k^p|
    \le
    \frac{4m}{a}
    \sqrt{\frac{2\log(4K/\delta)}{L}}.
\end{align}

Next, we bound the sampling bias term $|c_k^p-c_k|$. Define
\begin{align}
    A_k=\int_{\mathcal{T}}\phi_k(t)dt,
    \qquad
    B_k=\int_{\mathcal{T}}x(t)\phi_k(t)dt,
\end{align}
and
\begin{align}
    A_k^p=\int_{\mathcal{T}}\phi_k(t)p(t)dt,
    \qquad
    B_k^p=\int_{\mathcal{T}}x(t)\phi_k(t)p(t)dt.
\end{align}
Then
\begin{align}
    c_k=\frac{B_k}{A_k},
    \qquad
    c_k^p=\frac{B_k^p}{A_k^p}.
\end{align}
Thus,
\begin{align}
    |c_k^p-c_k|
    &=
    \left|
    \frac{B_k^p}{A_k^p}
    -
    \frac{B_k}{A_k}
    \right| \nonumber \\
    &\le
    \frac{|B_k^p-B_k|}{A_k^p}
    +
    |B_k|
    \left|
    \frac{1}{A_k^p}
    -
    \frac{1}{A_k}
    \right| \nonumber\\
    &\le 
    \frac{|B_k^p-B_k|}{A_k^p}
    +
    A_k
    \frac{|A_k^p-A_k|}{A_k^p A_k} \nonumber\\
    &\le
    \frac{|B_k^p-B_k| + |A_k^p-A_k| }{a} \nonumber \\
    &= 
    \frac{\left|
    \int_{\mathcal{T}}x(t)\phi_k(t)(p(t)-1)dt
    \right| + \left|
    \int_{\mathcal{T}}\phi_k(t)(p(t)-1)dt
    \right|}{a} \nonumber \\
    &\le 
    \frac{
    \int_{\mathcal{T}}|x(t)|\phi_k(t)|p(t)-1|dt +
    \int_{\mathcal{T}}\phi_k(t)|p(t)-1|dt}{a} \nonumber\\
    &\le
    \frac{2m \int_{\mathcal{T}}|p(t)-1|dt }{a} \nonumber\\
    &=
    \frac{2m \|p(t)-1\|_{L^1}}{a}
\end{align}
The second inequality holds due to $|B_k|=\left|\int_{\mathcal{T}}x(t)\phi_k(t)dt\right|\le\int_{\mathcal{T}}\phi_k(t)dt=A_k$.
Taking the maximum over $k\in[K]$, we obtain
\begin{align}
\label{eq2}
    \max_{k\in[K]}
    |c_k^p-c_k|
    \le
    \frac{2m}{a}\|p(t)-1\|_{L^1}.
\end{align}

Finally, combining the finite-sample approximation error in Eq.~\eqref{eq1} and the sampling bias term in Eq.~\eqref{eq2}  yields
\begin{align}
    \max_{k\in[K]}|\hat{c}_k-c_k|
    &\le
    \max_{k\in[K]}|\hat{c}_k-c_k^p|
    +
    \max_{k\in[K]}|c_k^p-c_k| \nonumber \\
    &\le
    \frac{4m}{a}
    \sqrt{\frac{2\log(4K/\delta)}{L}}
    +
    \frac{2m}{a}\|p(t)-1\|_{L^1}.
\end{align}
This completes the proof.

\section{Debiasing Property of Density-Corrected Estimation}
In this section, we further explain why the density-corrected estimator can eliminate the sampling bias. Recall when the observed timestamps are sampled from a non-uniform density $p(t)$, the uncorrected discrete estimator converges to the response coefficient under the timestamp sampling density, which leads to the non-vanishing bias term in Theorem~1.

To remove this bias, we apply importance weighting to compensate for the timestamp sampling density. The density-corrected estimator is defined as
\begin{align}
    \tilde c_k =
    \frac{
    \sum_{i=1}^{L}\frac{x_i\phi_k(t_i)}{p(t_i)}
    }{
    \sum_{i=1}^{L}\frac{\phi_k(t_i)}{p(t_i)}
    }.
\end{align}
The key difference from the uncorrected estimator is that the numerator and denominator now directly estimate the corresponding continuous-time integrals under the uniform time domain. Specifically, since $t_i$ is sampled from density $p(t)$, we have
\begin{align}
    \mathbb{E}_{p}\left[
    \frac{x(t)\phi_k(t)}{p(t)}
    \right]
    &=
    \int_{\mathcal{T}}
    \frac{x(t)\phi_k(t)}{p(t)}p(t)dt
    =
    \int_{\mathcal{T}}x(t)\phi_k(t)dt,
    \\
    \mathbb{E}_{p}\left[
    \frac{\phi_k(t)}{p(t)}
    \right]
    &=
    \int_{\mathcal{T}}
    \frac{\phi_k(t)}{p(t)}p(t)dt
    =
    \int_{\mathcal{T}}\phi_k(t)dt.
\end{align}
Therefore, the density-corrected estimator directly approximates the desired coefficient $c_k$, rather than the sampling-density-weighted coefficient $c_k^p$.

To obtain a finite-sample bound, we additionally assume that the sampling density is lower bounded, i.e., there exists a constant $\rho>0$ such that $p(t)\ge \rho$, for $\forall t\in\mathcal{T}$.
This condition ensures that the importance weights $1/p(t)$ are bounded. Under this assumption, together with $|x(t)|\le 1$ and $0\le \phi_k(t)\le m$, we have
\begin{align}
    \left|
    \frac{x(t_i)\phi_k(t_i)}{p(t_i)}
    \right|
    \le
    \frac{m}{\rho},
    \qquad
    0\le
    \frac{\phi_k(t_i)}{p(t_i)}
    \le
    \frac{m}{\rho}.
\end{align}
Following the same concentration and ratio-estimation arguments as in Appendix~A, when $L$ is sufficiently large, with probability at least $1-\delta$, the following bound holds:
\begin{align}
    \max_{k\in[K]}|\tilde c_k-c_k|
    \le
    \frac{4m}{a\rho}
    \sqrt{\frac{2\log(4K/\delta)}{L}}.
\end{align}

Compared with the bound for the uncorrected estimator in Theorem~1, the above bound does not contain the non-vanishing bias term related to $\|p(t)-1\|_{L^1}$. Therefore, the density-corrected estimator removes the systematic bias induced by irregular timestamp sampling.

\section{Datasets and Baseline Model Details}
\subsection{Datasets}
In this section, we provide a detailed description of the five public datasets used in our experiments, including their sources and characteristics. The basic statistics are summarized in Table~\ref{tab:dataset_statistics}.

\begin{table}[t]
\centering
\resizebox{\linewidth}{!}{
\begin{tabular}{lcccc}
\toprule
Dataset & \# Variables & \# samples & Avg \# Obs. & Max Length \\
\midrule
PhysioNet      & 36 & 11,981 & 308.6 & 47  \\
MIMIC          & 96 & 21,250 & 144.6 & 96  \\
HumanActivity  & 12 & 1,359  & 362.2 & 131 \\
StudentLife    & 9  & 20 & 7,680.5 & 1,370 \\
USHCN          & 5  & 1,114  & 313.5 & 337 \\
\bottomrule
\end{tabular}
}
\caption{Dataset statistics.}
\label{tab:dataset_statistics}
\end{table}

\begin{table*}[t]
\centering
\setlength{\tabcolsep}{5pt}
\renewcommand{\arraystretch}{1.35}
\normalsize
\caption{MAE results of DNBNet and baselines on five irregular time-series datasets. Best results are in bold and second-best results are underlined.}
\begin{tabular}{lccccc}
\toprule
\textbf{Method} & \textbf{USHCN} & \textbf{Human Activity} & \textbf{Student Life} & \textbf{PhysioNet} & \textbf{MIMIC} \\
\midrule
PrimeNet & $0.5094\pm0.0002$ & $1.7054\pm0.0005$ & $0.7231\pm0.0001$ & $0.6859\pm0.0000$ & $0.6632\pm0.0001$ \\
SeFT & $0.4981\pm0.0021$ & $0.9733\pm0.0006$ & $0.7201\pm0.0022$ & $0.6766\pm0.0005$ & $0.6689\pm0.0008$ \\
mTAN & $0.3412\pm0.0076$ & $0.2302\pm0.0071$ & $0.5742\pm0.0033$ & $0.4271\pm0.0041$ & $0.6749\pm0.0023$ \\
CRU & $0.3948\pm0.0069$ & $0.2673\pm0.0026$ & $0.6403\pm0.0043$ & $0.5782\pm0.0040$ & $0.5828\pm0.0046$ \\
GNeuralFlow & $0.3842\pm0.0024$ & $0.3802\pm0.0217$ & $0.6085\pm0.0163$ & $0.4012\pm0.0009$ & $0.5336\pm0.0058$ \\
Raindrop & $0.3515\pm0.0070$ & $0.2159\pm0.0027$ & $0.5816\pm0.0072$ & $0.4057\pm0.0006$ & $0.5047\pm0.0035$ \\
tPatchGNN & $0.4105\pm0.0725$ & $0.1444\pm0.0020$ & $0.5603\pm0.0009$ & $0.3665\pm0.0018$ & $0.4069\pm0.0015$ \\
GraFITi & $0.3437\pm0.0105$ & $0.1559\pm0.0056$ & $0.5644\pm0.0005$ & $\mathbf{0.3538\pm0.0015}$ & $\underline{0.4016\pm0.0040}$ \\
Warpformer & $\underline{0.3215\pm0.0017}$ & $0.1427\pm0.0005$ & $\mathbf{0.5510\pm0.0016}$ & $0.3718\pm0.0013$ & $0.4069\pm0.0014$ \\
Hi-Patch & $0.3364\pm0.0114$ & $0.1450\pm0.0008$ & $0.5580\pm0.0011$ & $0.3715\pm0.0036$ & $0.4112\pm0.0022$ \\
KAFNet & $0.3219\pm0.0028$ & $0.1501\pm0.0013$ & $0.5674\pm0.0011$ & $0.3809\pm0.0008$ & $0.4098\pm0.0022$ \\
APN & $0.3368\pm0.0147$ & $\underline{0.1401\pm0.0007}$ & $0.5682\pm0.0005$ & $0.3669\pm0.0002$ & $0.4145\pm0.0056$ \\
\midrule
\rowcolor{gray!8}
DNBNet (Ours) & $\mathbf{0.3053\pm0.0046}$ & $\mathbf{0.1376\pm0.0003}$ & $\underline{0.5515\pm0.0003}$ & $\underline{0.3644\pm0.0011}$ & $\mathbf{0.3981\pm0.0009}$ \\
\bottomrule
\end{tabular}
\label{tab:mae_results}
\end{table*}

\begin{table*}[t]
\centering
\setlength{\tabcolsep}{6pt}
\renewcommand{\arraystretch}{1.25}
\caption{Complete ablation study of DNBNet on five irregular time-series datasets. 
Best results are highlighted in \textbf{bold}.}
\begin{tabular*}{\textwidth}{@{\extracolsep{\fill}}lccccc}
\toprule
\multicolumn{6}{c}{\textbf{MSE}} \\
\midrule
\textbf{Variant} 
& \textbf{USHCN} 
& \textbf{Human Activity} 
& \textbf{Student Life} 
& \textbf{PhysioNet} 
& \textbf{MIMIC} \\
\midrule
w/o $p(t)$ 
& $0.4449\pm0.0291$
& $0.0564\pm0.0000$
& $0.6291\pm0.0004$
& $0.3161\pm0.0003$
& $0.4485\pm0.0012$ \\

Rep. RBF 
& $0.4144\pm0.0091$
& $0.0568\pm0.0002$
& $0.6242\pm0.0007$
& $0.3231\pm0.0007$
& $0.4507\pm0.0009$ \\

Rep. Fourier 
& $0.4101\pm0.0088$
& $0.0562\pm0.0001$
& $0.6257\pm0.0009$
& $0.3267\pm0.0002$
& $0.4869\pm0.0048$ \\

w/o AvgPool 
& $0.4195\pm0.0166$
& $0.0569\pm0.0001$
& $0.6269\pm0.0004$
& $0.3119\pm0.0006$
& $0.4494\pm0.0012$ \\

w/o BasDec 
& $0.4552\pm0.0424$
& $0.0559\pm0.0002$
& $0.6281\pm0.0011$
& $0.3092\pm0.0002$
& $0.4613\pm0.0023$ \\

DNBNet (Full) 
& $\mathbf{0.4013\pm0.0108}$
& $\mathbf{0.0551\pm0.0001}$
& $\mathbf{0.6136\pm0.0002}$
& $\mathbf{0.3089\pm0.0005}$
& $\mathbf{0.4289\pm0.0012}$ \\

\midrule
\multicolumn{6}{c}{\textbf{MAE}} \\
\midrule
\textbf{Variant} 
& \textbf{USHCN} 
& \textbf{Human Activity} 
& \textbf{Student Life} 
& \textbf{PhysioNet} 
& \textbf{MIMIC} \\
\midrule
w/o $p(t)$ 
& $0.3276\pm0.0200$
& $0.1386\pm0.0006$
& $0.5575\pm0.0005$
& $0.3694\pm0.0002$
& $0.4078\pm0.0014$ \\

Rep. RBF 
& $0.3079\pm0.0064$
& $0.1382\pm0.0003$
& $0.5549\pm0.0014$
& $0.3798\pm0.0012$
& $0.4116\pm0.0009$ \\

Rep. Fourier 
& $0.3139\pm0.0052$
& $0.1399\pm0.0005$
& $0.5571\pm0.0011$
& $0.3794\pm0.0001$
& $0.4239\pm0.0033$ \\

w/o AvgPool 
& $0.3147\pm0.0110$
& $0.1388\pm0.0003$
& $0.5569\pm0.0005$
& $0.3686\pm0.0010$
& $0.4080\pm0.0009$ \\

w/o BasDec 
& $0.3395\pm0.0278$
& $0.1387\pm0.0006$
& $0.5565\pm0.0011$
& $0.3652\pm0.0013$
& $0.4169\pm0.0022$ \\

DNBNet (Full) 
& $\mathbf{0.3053\pm0.0046}$
& $\mathbf{0.1376\pm0.0003}$
& $\mathbf{0.5515\pm0.0003}$
& $\mathbf{0.3644\pm0.0011}$
& $\mathbf{0.3981\pm0.0009}$ \\
\bottomrule
\end{tabular*}
\label{tab:complete_ablation}
\end{table*}

\paragraph{MIMIC.}
MIMIC is a large, freely accessible critical care database~\cite{johnson2016mimic}. It contains de-identified health records from patients who stayed in intensive care units (ICUs). The dataset is highly detailed, including vital signs, medications, and laboratory measurements. In our experiments, we use the clinical time series collected from the first 48 hours of each patient’s ICU stay. The resulting benchmark contains 21,250 samples with 96 variables.

\paragraph{PhysioNet.}
PhysioNet is another widely used clinical benchmark for irregular time series analysis~\cite{silva2012predicting}. It was originally released for predicting in-hospital mortality of ICU patients. Each record consists of multivariate measurements collected during the first 48 hours after admission. The processed benchmark used in our experiments contains 11,981 samples and 36 variables, such as serum glucose and heart rate.

\paragraph{Human Activity.}
Human Activity is a biomechanics-related dataset from the UCI Machine Learning Repository. It contains sensor measurements collected from five subjects performing various activities. The benchmark includes 1,359 samples and 12 variables representing 3D positions. The data are naturally irregular because the sensors record information at non-uniform time intervals.

\paragraph{StudentLife.}
StudentLife is a human behavior sensing dataset collected from college students using smartphones and wearable devices~\cite{nepal2024capturing}. In our setting, we use its numerical time series modality, which includes activity-related features such as biking, walking, and sleeping. The irregularity mainly arises from human scheduling, device usage patterns, and asynchronous sensor recording, making it a realistic benchmark for irregular multivariate time series forecasting.

\paragraph{USHCN.}
USHCN (United States Historical Climatology Network) is a climate dataset containing long-term meteorological observations from weather stations across the United States~\cite{menne2015long}. The processed benchmark contains 1,114 samples and 5 variables, such as daily maximum temperature, daily minimum temperature, and precipitation. Although the data are recorded daily, missing observations are common in practice, which makes this dataset suitable for irregular time series forecasting. Following previous work, we use a subset covering the period from 1996 to 2000.

\subsection{Baseline Model Details}

We compare our method with 12 representative baseline models, covering irregular time series forecasting, classification, and interpolation methods. These baselines include PrimeNet, SeFT, mTAN, CRU, GNeuralFlow, Raindrop, tPatchGNN, GraFITi, Warpformer, Hi-Patch, KAFNet, and APN.

\paragraph{PrimeNet}~\cite{chowdhury2023primenet} is a pretraining framework for irregular multivariate time series, which learns generalizable temporal representations from irregular observations and can be adapted to downstream forecasting tasks.

\paragraph{SeFT}~\cite{horn2020set} is a set-function-based model that treats irregular observations as an unordered set and uses permutation-invariant aggregation to learn representations without requiring time alignment.

\paragraph{mTAN}~\cite{DBLP:conf/iclr/ShuklaM21} employs multi-time attention to map irregular observations onto a set of reference time points, thereby obtaining a regularized latent representation for downstream prediction.

\paragraph{CRU}~\cite{schirmer2022modeling} is a continuous-time recurrent model that combines recurrent updates with latent uncertainty modeling, making it suitable for irregularly sampled observations with varying time gaps.

\paragraph{GNeuralFlow}~\cite{mercatali2024graph} extends neural flow modeling with graph structures to capture both continuous-time dynamics and inter-variable dependencies in irregular multivariate time series.

\paragraph{Raindrop}~\cite{DBLP:conf/iclr/ZhangZTZ22} is a graph attention model for IMTS, which propagates information across variables and timestamps to handle missingness and asynchronous observations.

\paragraph{tPatchGNN}~\cite{DBLP:conf/icml/ZhangYL0024} is a patch-based graph neural network that partitions irregular sequences into temporal patches and models temporal and variable dependencies jointly.

\paragraph{GraFITi}~\cite{yalavarthi2024grafiti} represents irregular time series as a bipartite graph over observations, timestamps, and variables, allowing the model to operate directly on sparse observations without dense alignment.

 \paragraph{Warpformer}~\cite{zhang2023warpformer} is a Transformer-based model that introduces a temporal warping mechanism for multi-scale modeling, aiming to better capture irregular and non-stationary temporal patterns.

\paragraph{Hi-Patch}~\cite{DBLP:conf/icml/LuoZ0025} adopts a hierarchical patching strategy to model irregular time series at multiple temporal granularities, which helps capture both local dynamics and long-range dependencies.

 \paragraph{KAFNet}~\cite{zhou2026revitalizing} is a recent IMTS forecasting model that uses kernel-based aggregation to encode irregular observations into compact latent representations for future prediction.

\paragraph{APN}~\cite{liu2026rethinking} is a recent basis-function-based forecasting method that models irregular observations through adaptive temporal aggregation, providing a strong and efficient baseline for IMTS forecasting.

\section{Additional Experiments Results}
\subsection{Complete MAE Results}
Due to space limitations, we report only the MSE results in the main paper and provide the complete MAE results in Table~\ref{tab:mae_results}. DNBNet achieves the best MAE on USHCN, Human Activity, and MIMIC, and obtains the second-best results on Student Life and PhysioNet, which is generally consistent with the MSE results. This demonstrates that DNBNet remains effective under different evaluation metrics and across diverse application domains.

\subsection{Varying Lookback Lengths and Forecast Horizons}
To further evaluate the robustness of DNBNet under different temporal settings, we conduct additional experiments by varying the lookback lengths and forecast horizons. As shown in Fig.~\ref{fig:lookback_windown}, DNBNet consistently achieves stable and competitive performance across different lookback windows. This indicates that the proposed model does not rely on a specific input length, and can effectively extract useful temporal patterns from both shorter and longer historical observations.

We also report the results under different forecast horizons in Table~\ref{tab:prediction_length}. The lookback lengths follow the settings in Table~\ref{tab:dataset_statistics} of main paper, while the prediction horizons are set to the remaining length of each sequence, including 12 hours for MIMIC-III and PhysioNet, 300 milliseconds for Human Activity, the next 3 observations for USHCN, and 10 days for Student Life. The results show that DNBNet maintains strong performance under these horizon settings, further demonstrating the effectiveness of the debiased neural basis-function response and the dual-branch forecasting decoder for irregular time-series forecasting.

\begin{table*}[t]
\centering
\renewcommand{\arraystretch}{1.25}
\setlength{\tabcolsep}{6pt}
\caption{Experimental results on five irregular multivariate time-series datasets evaluated by MSE and MAE. The look-back lengths follow Table~2, and the prediction horizons are set to the remaining length of each sequence: 12 hours for MIMIC-III and PhysioNet, 300 milliseconds for Human Activity, the next 3 observations for USHCN, and 10 days for Student Life.}
\begin{tabular*}{\textwidth}{@{\extracolsep{\fill}}lccccc}
\toprule
\multicolumn{6}{c}{\textbf{MSE}} \\
\midrule
\textbf{Method} 
& \textbf{USHCN} 
& \textbf{Human Activity} 
& \textbf{Student Life} 
& \textbf{PhysioNet} 
& \textbf{MIMIC} \\
\midrule
GraFTi
& $0.2089{\pm}0.0218$
& $0.0431{\pm}0.0006$
& $0.5789{\pm}0.0018$
& $\mathbf{0.3553{\pm}0.0016}$
& $0.4957{\pm}0.0032$ \\

Warpformer
& $0.1738{\pm}0.0069$
& $0.0446{\pm}0.0013$
& $0.5861{\pm}0.0032$
& $0.3558{\pm}0.0016$
& $0.5295{\pm}0.0035$ \\

Hi-Patch
& $0.2416{\pm}0.0205$
& $0.0472{\pm}0.0001$
& $0.5902{\pm}0.0020$
& $0.3666{\pm}0.0001$
& $0.5352{\pm}0.0062$ \\

KAFNet
& $0.2045{\pm}0.0091$
& $0.0677{\pm}0.0006$
& $0.5980{\pm}0.0024$
& $0.3647{\pm}0.0010$
& $0.5145{\pm}0.0076$ \\

APN
& $0.1816{\pm}0.0133$
& $0.0436{\pm}0.0004$
& $0.6057{\pm}0.0140$
& $0.3689{\pm}0.0026$
& $0.5396{\pm}0.0053$ \\

\textbf{NBFN (Ours)}
& $\mathbf{0.1730{\pm}0.0097}$
& $\mathbf{0.0429{\pm}0.0001}$
& $\mathbf{0.5763{\pm}0.0037}$
& $0.3584{\pm}0.0006$
& $\mathbf{0.4902{\pm}0.0085}$ \\

\midrule
\multicolumn{6}{c}{\textbf{MAE}} \\
\midrule
\textbf{Method} 
& \textbf{USHCN} 
& \textbf{Human Activity} 
& \textbf{Student Life} 
& \textbf{PhysioNet} 
& \textbf{MIMIC} \\
\midrule
GraFTi
& $0.2813{\pm}0.0031$
& $0.1203{\pm}0.0030$
& $0.5308{\pm}0.0012$
& $\mathbf{0.4051{\pm}0.0003}$
& $0.4369{\pm}0.0023$ \\

Warpformer
& $0.2682{\pm}0.0026$
& $0.1221{\pm}0.0022$
& $0.5317{\pm}0.0015$
& $0.4063{\pm}0.0019$
& $0.4412{\pm}0.0023$ \\

Hi-Patch
& $0.2923{\pm}0.0105$
& $0.1250{\pm}0.0001$
& $0.5318{\pm}0.0011$
& $0.4150{\pm}0.0011$
& $0.4544{\pm}0.0062$ \\

KAFNet
& $0.2714{\pm}0.0135$
& $0.1582{\pm}0.0015$
& $0.5386{\pm}0.0016$
& $0.4138{\pm}0.0014$
& $0.4419{\pm}0.0046$ \\

APN
& $0.2818{\pm}0.0098$
& $0.1211{\pm}0.0011$
& $0.5480{\pm}0.0156$
& $0.4153{\pm}0.0027$
& $0.4564{\pm}0.0044$ \\

\textbf{NBFN (Ours)}
& $\mathbf{0.2518{\pm}0.0073}$
& $\mathbf{0.1172{\pm}0.0006}$
& $\mathbf{0.5291{\pm}0.0022}$
& $0.4084{\pm}0.0012$
& $\mathbf{0.4338{\pm}0.0067}$ \\
\bottomrule
\end{tabular*}
\label{tab:prediction_length}
\end{table*}

\begin{figure*}
    \centering
    \includegraphics[width=1\linewidth]{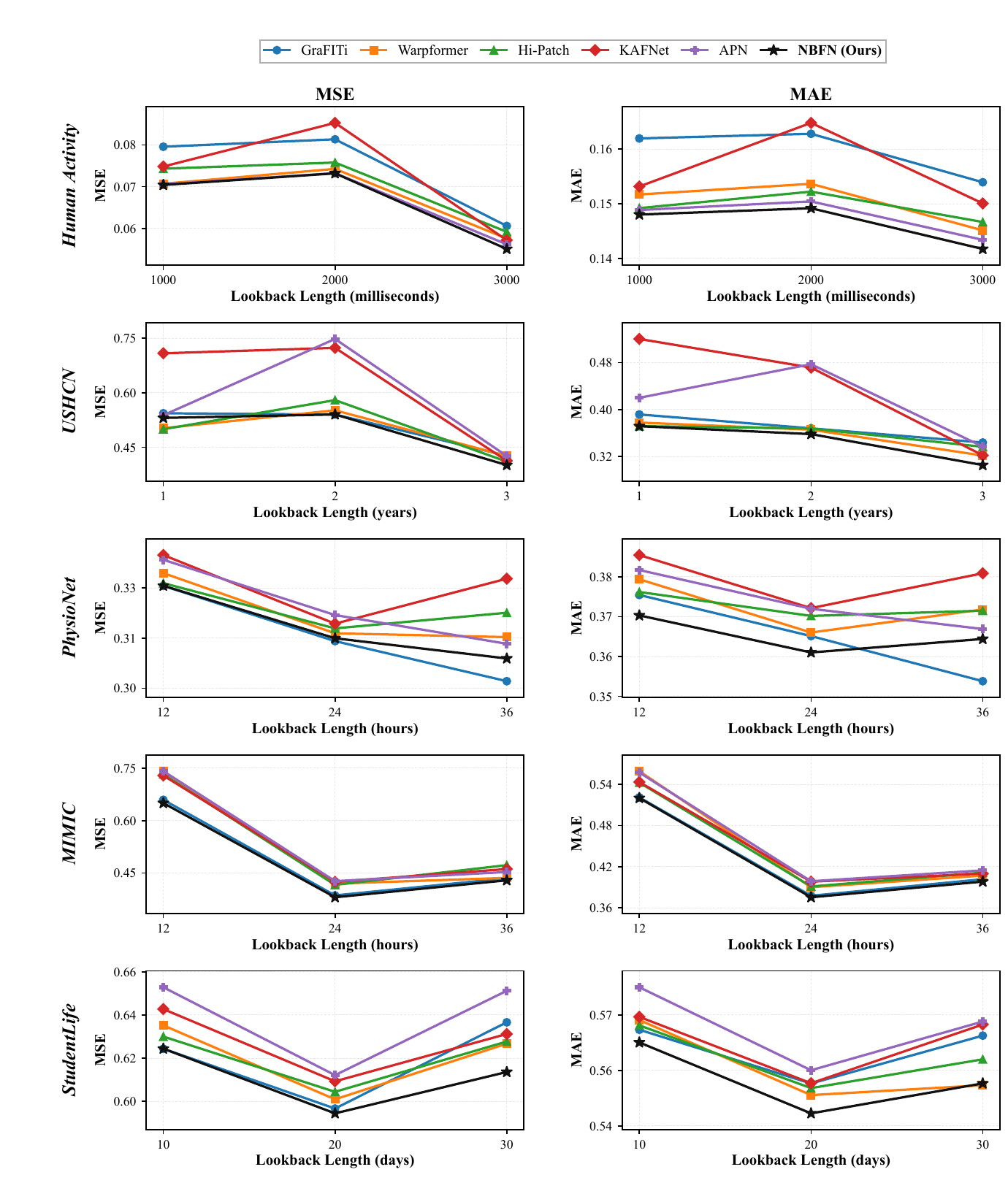}
    \caption{Forecasting performance with varying lookback lengths and fixed forecast horizons.}
    \label{fig:lookback_windown}
\end{figure*}

\subsection{Complete Ablation Results}
We further report the complete ablation results on all five datasets in Tables~\ref{tab:complete_ablation}. The full DNBNet consistently outperforms its variants, verifying the contribution of each component. In particular, removing the density correction term $p(t)$, replacing learnable basis functions with predefined RBF or Fourier bases, or removing the average pooling module or basis-function decoder all lead to performance degradation. These results confirm the effectiveness of debiased response estimation, learnable basis functions, multi-scale representation, and basis-function-based prediction.


\end{document}